\documentclass[sigconf]{acmart}

\AtBeginDocument{%
  }

\setcopyright{acmlicensed}
\copyrightyear{2026}
\acmYear{2026}
\setcopyright{cc}
\setcctype{by}
\acmConference[KDD 2026] {Proceedings of the 32nd ACM SIGKDD Conference on Knowledge Discovery and Data Mining V.2}{August 9--13, 2026}{Jeju Island, Republic of Korea.}
\acmBooktitle{Proceedings of the 32nd ACM SIGKDD Conference on Knowledge Discovery and Data Mining V.2 (KDD 2026), August 9--13, 2026, Jeju Island, Republic of Korea}
\acmISBN{979-8-4007-2259-2/2026/08}
\acmDOI{10.1145/3770855.3818864}

\usepackage{graphicx}
\usepackage[table]{xcolor}
\usepackage{mathtools}
\usepackage{amsthm}
\usepackage{bbm}
\usepackage{tasks}
\usepackage{enumitem}
\graphicspath{{img/}}
\usepackage{wrapfig}
\usepackage[capitalise]{cleveref}
\usepackage{float}
\usepackage{booktabs}
\usepackage{url}
\usepackage{pifont}
\usepackage{makecell}
\usepackage{multicol}
\usepackage{tikz-cd}
\usepackage{units}
\usepackage[ruled]{algorithm2e}
\usepackage{subcaption}
\usepackage{titletoc}
\usepackage{appendix}
\usepackage{dsfont}
\usepackage{multirow}

\providecommand{\Log}{\operatorname{Log}}
\providecommand{\Exp}{\operatorname{Exp}}

\providecommand{\Diag}{\operatorname{Diag}}

\newcommand{\cor}[1]{\mathcal{C}_{++}^{#1}}

\newcommand{\off}{\operatorname{Off}}
\newcommand{\offlog}{\operatorname{\Log^{o}}}
\newcommand{\offexp}{\operatorname{\Exp^{o}}}
\newcommand{\lslog}{\operatorname{\Log^{\star}}}
\newcommand{\lsexp}{\operatorname{\Exp^{\star}}}

\theoremstyle{plain}
\newtheorem{theorem}{Theorem}[section]
\newtheorem{proposition}[theorem]{Proposition}
\newtheorem{lemma}[theorem]{Lemma}
\newtheorem{corollary}[theorem]{Corollary}
\theoremstyle{definition}
\newtheorem{definition}[theorem]{Definition}

\theoremstyle{remark}

\providecommand{\dist}{\operatorname{d}}

\providecommand{\Cov}{{\mathrm{Cov}}}

\providecommand{\Log}{\operatorname{Log}}
\providecommand{\Exp}{\operatorname{Exp}}

\providecommand{\argmin}{\operatorname{argmin}}

\providecommand{\mlog}{\operatorname{mlog}}
\providecommand{\mexp}{\operatorname{mexp}}

\providecommand{\mexp}{\operatorname{mexp}}

\crefname{equation}{Eq.}{Eqs.}
\Crefname{equation}{Equation}{Equations}
\crefname{figure}{Fig.}{Figs.}
\Crefname{figure}{Figure}{Figures}
\crefname{table}{Tab.}{Tabs.}
\Crefname{table}{Table}{Tables}
\crefname{algocf}{Alg.}{Algs.}
\Crefname{algocf}{Algorithm}{Algorithms}
\crefname{section}{Sec.}{Secs.}
\Crefname{section}{Section}{Sections}
\crefname{appendix}{App.}{Apps.}
\Crefname{appendix}{Appendix}{Appendices}

\crefname{theorem}{Thm.}{Thms.}
\Crefname{theorem}{Theorem}{Theorems}
\crefname{lemma}{Lem.}{Lems.}
\Crefname{lemma}{Lemma}{Lemmas}
\crefname{definition}{Def.}{Defs.}
\Crefname{definition}{Definition}{Definitions}
\crefname{corollary}{Cor.}{Cors.}
\Crefname{corollary}{Corollary}{Corollaries}
\crefname{remark}{Rem.}{Rems.}
\Crefname{remark}{Remark}{Remarks}
\crefname{proposition}{Prop.}{Props.}
\Crefname{proposition}{Proposition}{Propositions}
\crefname{proof}{Pr.}{Prs.}
\Crefname{proof}{Proof}{Proofs}
\definecolor{gain}{RGB}{0,191,65}
\usepackage{tabularx}
\newcolumntype{Y}{>{\centering\arraybackslash}X}

\begin{document}

\title{A Sliced-Wasserstein Framework on Correlation Matrices for EEG Decoding}

\author{Chen Hu}
\authornote{Equal contribution.}
\email{huchen.ml@gmail.com}
\affiliation{%
  \institution{Westlake University}
  \city{Hangzhou}
  \country{China}
}

\author{Rui Wang}
\authornotemark[1]
\email{cs_wr@jiangnan.edu.cn}
\affiliation{%
  \institution{School of Artificial Intelligence and Computer Science}
  \institution{Jiangnan University}
  \city{Wuxi}
  \country{China}
}

\author{Jiale Zhou}
\email{zhoujiale@westlake.edu.cn}
\affiliation{
  \institution{Westlake University}
  \city{Hangzhou}
  \country{China}
}
\affiliation{
  \institution{Zhejiang University}
  \city{Hangzhou}
  \country{China}
}

\author{Jingjun Yi}
\email{rsjingjuny@whu.edu.cn}
\affiliation{
  \institution{Westlake University}
  \city{Hangzhou}
  \country{China}
}

\author{Shaocheng Jin}
\email{shaochengjin.ai@gmail.com}
\affiliation{%
  \institution{School of Artificial Intelligence and Computer Science}
  \institution{Jiangnan University}
  \city{Wuxi}
  \country{China}
  }

\author{Yidong Song}
\authornote{Corresponding authors.}
\email{songyd6@mail2.sysu.edu.cn}
\affiliation{%
 \institution{Sun Yat-sen University}
 \city{Guangzhou}
 \country{China}}

\author{Yefeng Zheng}
\authornotemark[2]
\email{zhengyefeng@westlake.edu.cn}
\affiliation{%
  \institution{Westlake University}
  \city{Hangzhou}
  \country{China}
}

\renewcommand{\shortauthors}{Chen Hu et al.}

\begin{abstract}
Electroencephalography (EEG) offers noninvasive, millisecond resolution recordings of neuronal activity and is widely used in neuroscience and healthcare. 
Many EEG decoding pipelines rely on covariance descriptors for their robustness to noise, but such representations are sensitive to channel-wise scaling.
Recent studies have therefore advocated full-rank correlation matrices as a scale-invariant alternative for EEG decoding.
In this paper, we study Sliced-Wasserstein (SW) discrepancies for probability distributions on the manifold of full-rank correlation matrices. We adopt the pullback-Euclidean formulation of SW, referred to as Pullback Euclidean Metric Sliced-Wasserstein (PEMSW), and instantiate it under two recently introduced correlation geometries, \textit{i.e.}, the Off-Log Metric (OLM) and Log-Scaled Metric (LSM). This yields two Correlation Sliced-Wasserstein (CorSW) discrepancies with closed-form slicing coordinates and efficient computation through one-dimensional Wasserstein distances. Building on CorSW, we further develop a domain generalization (DG) framework for EEG decoding.
Experiments on three EEG datasets demonstrate improved generalization under distribution shifts, with low training overhead and no additional inference cost.
The source code is available at \href{https://github.com/ChenHu-ML/CorSW}{github.com/ChenHu-ML/CorSW}.
\end{abstract}

\begin{CCSXML}
<ccs2012>
<concept>
<concept_id>10010405.10010444</concept_id>
<concept_desc>Applied computing~Life and medical sciences</concept_desc>
<concept_significance>500</concept_significance>
</concept>
<concept>
<concept_id>10010147.10010257</concept_id>
<concept_desc>Computing methodologies~Machine learning</concept_desc>
<concept_significance>500</concept_significance>
</concept>
<concept>
<concept_id>10002950.10003741.10003742.10003745</concept_id>
<concept_desc>Mathematics of computing~Geometric topology</concept_desc>
<concept_significance>500</concept_significance>
</concept>
</ccs2012>
\end{CCSXML}

\ccsdesc[500]{Applied computing~Life and medical sciences}
\ccsdesc[500]{Mathematics of computing~Geometric topology}

\keywords{
Electroencephalography (EEG),
Neuroscience,
Correlation matrices,
Sliced-Wasserstein distance,
Geometric deep learning
}





\maketitle

\section{Introduction}
\label{sec:intro}
Electroencephalography (EEG) offers a unique, non-invasive window into brain dynamics by recording neural activity at millisecond time scales. 
This high temporal resolution has supported a wide spectrum of neuroscience and clinical applications, ranging from seizure detection~\citep{Ahmad2022SeizureDetection, Cherian2022SeizureDetection} and sleep staging~\citep{Aboalayon2016SleepClassification, Phan2022SleepStaging} to motor imagery decoding~\citep{Altaheri2023MotorImagery, ju2023graph, Roy2019ChronoNet}, large-scale abnormality screening~\citep{Roy2019ChronoNet}, affective state analysis~\citep{Suhaimi2020EmotionRecognition, Dadebayev2022EmotionRecognition}, and attention tracking in auditory~\citep{Biesmans2016AuditoryAttention}. 
However, EEG signals are inherently noisy and non-stationary, with statistics that vary substantially across subjects and sessions~\citep{Hine2017RestingStateEEG}, resulting in pronounced distribution shifts and poor generalization.

A common representation in EEG decoding summarizes the power and spatial distribution of a multi-channel EEG window using covariance matrices~\citep{blankertz2007optimizing}. 
Such matrices are typically symmetric positive definite (SPD) and naturally lie on Riemannian manifolds rather than Euclidean spaces~\citep{bhatia2009positive}. 
This has fueled growing interest in deep geometric learning, which extends neural networks to non-Euclidean domains by respecting manifold structure.
Recent models operate directly on SPD manifolds to leverage their
Riemannian geometry~\citep{spdnet, grnet, spdnetbn, manifoldnet, gdlnet, wang2025learning, chen2023riemannian, chen2024adaptive,chen2024liebn,chen2024rmlr,chen2024rmlrspd,chen2025gyrogroup,chen2025understanding}, and SPD-based methods have become competitive baselines for EEG decoding~\citep{kobler2022spd, matt, ju2023graph, ju2024deep}.

Nevertheless, covariance-based representations are not always ideal in scientific settings where absolute scales act as nuisance factors.
This motivates the use of correlation matrices as normalized and scale-invariant alternatives~\citep{david2019riemannian,thanwerdas2024permutation}.
Correlation matrices are widely used when relative dependencies matter more than magnitudes, including financial econometrics~\citep{rebonato2011most}, brain-computer interfaces (BCI)~\citep{jalili2011constructing, hu2025correlation}, and Gaussian graphical modeling~\citep{epskamp2018tutorial}.
In EEG analysis, where electrode signal amplitudes often vary substantially, correlation matrices emphasize inter-channel dependencies and remain informative.
Despite their importance, correlation matrices have received limited attention in deep geometric learning, as they form a quotient manifold of SPD matrices under the congruence action of positive diagonal matrices~\citep{david2019riemannian,thanwerdas2024permutation} and do not admit standard SPD geometry.
Only recently have intrinsic Riemannian structures been developed for the correlation manifold, notably the Off-Log Metric (OLM) and Log-Scaled Metric (LSM)~\citep{thanwerdas2024permutation}.

Sets of covariance or correlation matrices extracted from EEG recordings can be naturally viewed as samples from underlying probability distributions~\citep{yair2019domain}, motivating the use of distributional comparison tools.
This naturally calls for the development of computationally efficient tools that operate directly at the distribution level. 
In Euclidean spaces, optimal transport (OT) provides a principled framework for comparing probability distributions and induces the Wasserstein distance~\citep{villani2009optimal, peyre2019computational}.
OT can be extended to Riemannian manifolds in a way that respects the underlying  geometry~\citep{mccann2001polar, cui2019spherical, ju2022deep}.
However, computing Wasserstein distances typically requires solving large-scale linear programs over the samples, with a computational complexity of $\mathcal{O}\left(n^3\log(n)\right)$ in the number of samples $n$, which becomes prohibitive for large-scale learning. 

To alleviate this burden, several alternatives have been proposed, such as entropic regularization~\citep{cuturi2013sinkhorn} or mini-batch approximations~\citep{fatras2019learning}. 
Among them, the Sliced-Wasserstein  (SW) distance is particularly appealing, which computes the average Wasserstein distance over one-dimensional projections~\citep{rabin2011wasserstein}, substantially reducing computational cost while preserving key topological and statistical properties~\citep{bonnotte2013unidimensional, nadjahi2020statistical, bayraktar2021strong}. 
Building on the success of the Euclidean SW distance, there has been growing interest in generalizing SW to non-Euclidean spaces. Notably, recent works have generalized SW to several classes of vector manifolds, including spherical~\citep{bonet2022spherical} and hyperbolic spaces~\citep{bonet2022hyperbolic}. More recently, SW has also been extended to the manifold of SPD matrices~\citep{bonet2023sliced}.
Nevertheless, a principled SW construction tailored to the correlation manifold remains missing.

A useful route for extending SW beyond Euclidean spaces is to exploit geometries induced by global embeddings. In particular, \citet{bonet2025sliced} developed Sliced-Wasserstein discrepancies on Cartan-Hadamard manifolds, including the pullback-Euclidean setting and applications to SPD manifolds. This perspective is especially convenient when the manifold geometry is induced by a diffeomorphic embedding, since geodesic slicing can be implemented as linear slicing in the embedding space. Motivated by this viewpoint, we study SW discrepancies on the manifold of full-rank correlation matrices. Under the OLM and LSM geometries~\citep{thanwerdas2024permutation}, correlation matrices admit explicit pullback-Euclidean representations, allowing us to define practical Correlation Sliced-Wasserstein (CorSW) discrepancies with closed-form slicing coordinates.
We finally demonstrate the practical relevance of CorSW by using it to build a domain generalization framework for EEG decoding.
In summary, our main contributions are as follows:
\begin{itemize}
    \item We introduce Correlation Sliced-Wasserstein (CorSW) discrepancies for probability distributions on the manifold of full-rank correlation matrices. Under the OLM and LSM geometries, CorSW admits explicit slicing coordinates and efficient Monte-Carlo estimation via one-dimensional Wasserstein distances.

    \item We incorporate CorSW into a source-only domain generalization framework for EEG decoding, aligning correlation-valued representations across source domains without requiring target-domain data.

    \item We evaluate the proposed framework on three benchmark EEG datasets and show consistent improvements under distribution shifts, with low training overhead and no additional inference-time cost.
\end{itemize}
\section{Preliminary}
\label{sec:preliminary}
\begin{table*}[t]
    \centering
    \caption{Summary of diffeomorphic mappings, geodesic distances, and geodesics under OLM and LSM~\cite{thanwerdas2024permutation}.}
    \label{tab:cor_manifold_ops}
    \resizebox{0.97\linewidth}{!}{
    \begin{tabular}{c|ccc}
        \toprule
        \textbf{Metric} & \textbf{Diffeomorphisms} & \textbf{Geodesic Distance $\dist(C_1, C_2)$ } & \textbf{Geodesic from $C_1$  to $C_2, t\in[0,1]$} \\
        \midrule
        \textbf{OLM} 
        & 
        $ \begin{aligned}
        &\offlog: C \in \cor{n} \mapsto \off \left(\mlog(C)\right) \in \mathrm{Hol}(n) \\
        &\offexp: S \in \mathrm{Hol}(n) \mapsto \mexp \left(S + \mathcal{D}^{o}(S)\right) \in \cor{n}  \\
        \end{aligned} \quad\quad$ 
        & 
        $\dist^{o}(C_1, C_2) = \|\offlog(C_1) - \offlog(C_2)\|_F$  
        & 
        $\offexp \bigl((1-t)\offlog(C_1) + t\,\offlog(C_2)\bigr)$ 
        \\
        \midrule

        \textbf{LSM} 
        & 
        $\begin{aligned}
        &\lslog: C \in \cor{n} \mapsto \mlog \left(\mathcal{D}^\star(C)\,C\,\mathcal{D}^\star(C)\right) \in \mathrm{Row}_0(n)\\
        &\lsexp: S \in \mathrm{Row}_0(n) \mapsto \operatorname{Cor} \left(\mexp(S)\right) \in \cor{n}\\
        \end{aligned}$ 
        & 
        $\dist^{\star}(C_1, C_2) = \|\lslog(C_1) - \lslog(C_2)\|_F$  
        & 
        $\lsexp \bigl((1-t)\lslog(C_1) + t\,\lslog(C_2)\bigr)$ 
        \\
        \bottomrule
    \end{tabular}
    }
\end{table*}

In this section, we briefly review the PEM, the geometry of full-rank correlation matrices, and the Wasserstein distance.
For more in-depth discussions, please refer to
\citep{do1992riemannian,villani2009optimal,loring2011introduction,david2019riemannian,thanwerdas2024permutation}.

\paragraph{\textbf{Pullback Euclidean metrics.}}
Let $\mathcal{M}$ be a smooth manifold and $(\mathcal{E},\langle\cdot,\cdot\rangle_{\mathcal{E}})$ denote a finite-dimensional inner-product space. Given a global diffeomorphism $\phi:\mathcal{M}\to\mathcal{E}$, the PEM on $\mathcal{M}$ is defined as
\begin{equation}
(\phi^{\ast} g_{\mathcal{E}})_x(\xi,\eta)
\triangleq \bigl\langle \phi_{*,x}(\xi),\,\phi_{*,x}(\eta)\bigr\rangle_{\mathcal{E}}, \xi,\eta\in T_x\mathcal{M},
\end{equation}
where $\phi_{*,x}:T_x\mathcal{M}\to T_{\phi(x)}\mathcal{E}\simeq\mathcal{E}$ is the differential at $x$. In this case, for any $x_0,x_1\in\mathcal{M}$, the geodesic $\gamma(x_0, x_1, t)$ and the induced distance $d_{\mathcal{M}}(x_0,x_1)$ admit closed-form expressions, given by
\begin{align}
\label{eq:pullback_geodesic}
\gamma(t) &= \phi^{-1} \bigl((1-t)\phi(x_0)+t\,\phi(x_1)\bigr), \\
\label{eq:pullback_metric}
d_{\mathcal{M}}(x_0,x_1) &= \|\phi(x_0)-\phi(x_1)\|_{\mathcal{E}} .
\end{align}

\paragraph{\textbf{Correlation matrices}.}
Given random vector $X$ with invertible covariance matrix $P = (\Cov(X_i,X_j))_{1\le i,j\le n}$, its correlation matrix is defined by variance normalization as
\begin{equation}
\label{eq:cov2corr}
C = \operatorname{Cor}(P) = \Diag(P)^{-\frac{1}{2}}\, P\, \Diag(P)^{-\frac{1}{2}} \in \cor{n},
\end{equation}
where $\Diag(P)$ denotes the diagonal matrix of $P$, and $\cor{n}$ is the set of full-rank correlation matrices.
Recent work has shown that $\cor{n}$ admits several Riemannian metrics induced by diffeomorphisms, which yield closed-form expressions for geodesic distances and curves~\citep{thanwerdas2024permutation}. In this work, we focus on two metrics: the OLM and the LSM, as summarized in \cref{tab:cor_manifold_ops}.
For OLM, the diffeomorphic pair $(\offlog,\offexp)$ maps $\cor{n}$ to $\mathrm{Hol}(n)=\{X\in\mathbb{R}^{n\times n}\mid X=X^\top,\ \Diag(X)=\mathbf{0}\}$, where $\off(\cdot)$ extracts the off-diagonal part. Following \citet[Sec.~3.3]{archakov2021new}, the diagonal correction $\mathcal{D}^{o}(S)$ in $\offexp$ is the unique diagonal matrix such that \footnote{In implementation, $\mathcal{D}^{o}(S)$ is obtained by fixed-point iteration.}
\begin{equation}
\log \left(\Diag\bigl(\exp(S+\mathcal{D}^{o}(S))\bigr)\right)=\mathbf{0}.
\end{equation}

For LSM, $(\lslog,\lsexp)$ maps $\cor{n}$ to $\mathrm{Row}_0(n)=\{X\in\mathbb{R}^{n\times n}\mid X=X^\top,\ X\mathds{1}=\mathbf{0}\}$. The scaling matrix $\mathcal{D}^\star(C)$ is defined as the unique positive solution of
\begin{equation}
\label{eq:lsm_log_newton}
f:x\in\mathbb{R}^n \mapsto Cx - \frac{1}{x},
\end{equation}
which can be solved using a damped Newton method \citep[Sec.~3.5]{thanwerdas2024permutation}.
In all experiments, we use the standard fixed-point solver for the OLM diagonal correction and damped Newton iterations for the LSM scaling, initialized at the zero diagonal correction and the all-one scaling vector, respectively.
These mappings are evaluated once per batch before the one-dimensional SW computations; CorSW therefore requires no iterative OT solver.
Consequently, under both OLM and LSM, geodesic distances and geodesics on $\cor{n}$ admit closed forms
as listed in \cref{tab:cor_manifold_ops}.

\paragraph{\textbf{Wasserstein distance}.}
Let $\mathcal{P}_p(\mathbb{R}^d)$ be the set of probability measures on $\mathbb{R}^d$ with finite $p$-th moment for $p \ge 1$.
In general, the $p$-Wasserstein distance between $\mu,\nu \in \mathcal{P}_p(\mathbb{R}^d)$ is defined as
\begin{equation}
\label{eq:wasserstein}
W_p^p(\mu,\nu) = \inf_{\gamma \in \Pi(\mu,\nu)} \int_{\mathbb{R}^d \times \mathbb{R}^d} \|x-y\|_2^p \, \mathrm{d}\gamma(x,y),
\end{equation}
where $\Pi(\mu,\nu)$ is the set of joint distributions with marginals $\mu$ and $\nu$.
For empirical measures based on $n$ samples, computing $W_p$ requires solving a linear program, whose complexity is $O(n^3\log n)$ \citep{pele2009fast}, making it impractical for large-scale learning problems. 

A key simplification occurs in the one-dimensional case. When $d=1$, $W_p$ admits an explicit formula \citep[Remark~2.30]{peyre2019computational}:
\begin{equation}
\label{eq:1d-wasserstein}
W_p^p(\mu,\nu) = \int_0^1 \big|F_\mu^{-1}(u) - F_\nu^{-1}(u)\big|^p \, \mathrm{d}u,
\end{equation}
where $F_\mu^{-1}$ and $F_\nu^{-1}$ denote the quantile functions of $\mu$ and $\nu$ respectively, which can be computed from $n$ samples in $O(n\log n)$ time.
Building on this tractability, the SW distance provides a scalable approximation of $W_p$ in higher dimensions~\citep{rabin2011wasserstein,bonneel2015sliced}.
It is defined by averaging Wasserstein distances between projected measures.
\begin{equation}
\mathrm{SW}_p^p(\mu,\nu)
=\int_{S^{d-1}} W_p^p \left(t^\theta_\#\mu,\ t^\theta_\#\nu\right)\,\mathrm{d}\lambda(\theta),
\end{equation}
where $\lambda$ is the uniform measure on $S^{d-1}$. In practice, the integral is approximated via Monte Carlo sampling using $L$ projection directions, resulting in a computational complexity of $O\big(L n (d + \log n)\big)$.
Beyond computational efficiency, $\mathrm{SW}_p$ enjoys strong theoretical guarantees, including dimension-independent sample complexity and a Hilbertian structure \citep{kolouri2016sliced,nadjahi2019asymptotic,nadjahi2020statistical}.
\section{Correlation SW via Pullback Slicing}
\label{sec:pullback_sw}

This section develops CorSW from a pullback slicing perspective. We first introduce the notation for SW discrepancies on manifolds whose geometry is induced by a diffeomorphic embedding, in the pullback-Euclidean setting. We then specialize this construction to correlation manifolds $\cor{n}$ under OLM and LSM, yielding explicit CorSW formulations. The proofs are deferred to \cref{sec:proofs}.

\subsection{Pullback Euclidean SW}
\label{sec:pem_sw}

In Euclidean spaces, the SW distance is obtained by projecting measures onto one-dimensional lines and averaging the resulting $1$D Wasserstein distances. We begin with the pullback-Euclidean setting, where a global diffeomorphism $\phi$ maps the manifold $\mathcal{M}$ to an embedding space $\mathcal{E}$. Under this structure, geodesics on $\mathcal{M}$ become straight lines in $\mathcal{E}$, so geodesic slicing on the manifold can be implemented as standard linear slicing in the embedding space.

Let $(\mathcal{M},\phi^{\ast}g_{\mathcal{E}})$ be a pullback manifold as in \cref{eq:pullback_metric}.
We fix a reference point $x_0\in\mathcal{M}$ and write $\langle\cdot,\cdot\rangle_{\mathcal{E}}$ for the Euclidean inner product on $\mathcal{E}$.
the unit sphere in $\mathcal{E}$ can be defined as
\begin{equation}
S_{\mathcal{E}}
=
\{a\in\mathcal{E}:\|a\|_{\mathcal{E}}=1\},
\end{equation}
equipped with the uniform measure $\lambda_{\mathcal{E}}$.

\paragraph{\textbf{Geodesic slicing}.}
We consider geodesics through the reference point $x_0$ and use the corresponding signed scalar coordinate as a one-dimensional projection.
For a direction $a\in S_{\mathcal{E}}$, the pullback Euclidean structure gives the geodesic line
\begin{equation}
\label{eq:pem_geodesic_line}
\mathcal{G}_a
=\Bigl\{\phi^{-1}\bigl(\phi(x_0)+t\,a\bigr):t\in\mathbb{R}\Bigr\}.
\end{equation}
The slicing coordinate of $x\in\mathcal{M}$ is obtained by $\phi(x)$ onto the direction $a \in \mathcal{E}$, which yields the closed-form expressions in \cref{thm:pem_projection}.

\begin{theorem}[Projection and slicing coordinate on PEMs]
\label{thm:pem_projection}
Let $a\in S_{\mathcal{E}}$. For any $x\in\mathcal{M}$, the projection of $x$ onto $\mathcal{G}_a$ admits the form
\begin{equation}
\label{eq:pem_projection}
P^{\mathcal{G}_a}(x)
=\phi^{-1}\!\Bigl(\phi(x_0)+t_a(x)\,a\Bigr)\in\mathcal{G}_a,
\end{equation}
where the associated slicing coordinate is
\begin{equation}
\label{eq:pem_coordinate}
t_a(x)=\bigl\langle a,\ \phi(x)-\phi(x_0)\bigr\rangle_{\mathcal{E}}.
\end{equation}
\end{theorem}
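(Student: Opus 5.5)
The natural reading of "projection onto $\mathcal{G}_a$" is the metric projection: $P^{\mathcal{G}_a}(x)$ is the point of $\mathcal{G}_a$ minimizing the geodesic distance $d_{\mathcal{M}}(x,\cdot)$. The plan is to transport this minimization to $\mathcal{E}$ through $\phi$ and solve it there. Since $\phi$ is a global diffeomorphism and the pullback metric satisfies $d_{\mathcal{M}}(x,y)=\|\phi(x)-\phi(y)\|_{\mathcal{E}}$ by \cref{eq:pullback_metric}, $\phi$ is an isometry from $(\mathcal{M},d_{\mathcal{M}})$ onto $(\mathcal{E},\|\cdot\|_{\mathcal{E}})$. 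By \cref{eq:pem_geodesic_line}, every point of $\mathcal{G}_a$ has the form $y_t=\phi^{-1}(\phi(x_0)+t\,a)$, and the map $t\mapsto y_t$ is a bijection $\mathbb{R}\to\mathcal{G}_a$ because $\phi^{-1}$ is injective and $a\neq 0$. The problem therefore reduces to minimizing over $t\in\mathbb{R}$ the function
\begin{equation*}
h(t)=d_{\mathcal{M}}(x,y_t)^2=\bigl\|\phi(x)-\phi(x_0)-t\,a\bigr\|_{\mathcal{E}}^2 .
\end{equation*}

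Set $v=\phi(x)-\phi(x_0)$ and expand using $\|a\|_{\mathcal{E}}=1$:
\begin{equation*}
h(t)=\|v\|_{\mathcal{E}}^2-2t\,\langle a,v\rangle_{\mathcal{E}}+t^2 .
\end{equation*}
This is a strictly convex quadratic in $t$. Its unique minimizer is $t_a(x)=\langle a,v\rangle_{\mathcal{E}}$, which is exactly \cref{eq:pem_coordinate}. Minimizing $d_{\mathcal{M}}$ and minimizing its square give the same point, so the projection exists, is unique, and equals $\phi^{-1}(\phi(x_0)+t_a(x)\,a)$, which is \cref{eq:pem_projection}. Geometrically, this is just the orthogonal projection of $\phi(x)$ onto the affine line $\phi(x_0)+\mathbb{R}a$ in $\mathcal{E}$, pulled back by $\phi^{-1}$.

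The argument has no real analytic obstacle. The step that needs care is justifying that the manifold projection coincides with the Euclidean one, that is, that the distance on $\mathcal{M}$ is globally $\|\phi(\cdot)-\phi(\cdot)\|_{\mathcal{E}}$ and not only a local expression. I would take this from \cref{eq:pullback_metric}. If a fuller justification is wanted, note that $\phi$ is a Riemannian isometry onto the flat space $\mathcal{E}$, so lengths of curves are preserved and minimizing curves map to straight segments.

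One could alternatively define the slicing coordinate through a Busemann function. Since $\mathcal{M}$ is flat, the Busemann function of the ray $t\mapsto\phi^{-1}(\phi(x_0)+t\,a)$ is $B^a(x)=-\langle a,\phi(x)-\phi(x_0)\rangle_{\mathcal{E}}$. This gives the same coordinate up to sign, so the statement also holds under that reading of "projection".
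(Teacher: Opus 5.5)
Your proposal is correct and follows essentially the same route as the paper. Both arguments define the projection as the metric projection, use the pullback distance to reduce it to minimizing the strictly convex quadratic $t\mapsto\|\phi(x)-\phi(x_0)-t\,a\|_{\mathcal{E}}^2$, and read off the unique minimizer $t_a(x)=\langle a,\phi(x)-\phi(x_0)\rangle_{\mathcal{E}}$; the paper additionally checks first that $t\mapsto\phi^{-1}(\phi(x_0)+t\,a)$ is a unit-speed geodesic, and it treats your Busemann remark as a separate proposition with the same sign convention $B_{\gamma_a}=-t_a$.
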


\paragraph{\textbf{Pullback SW discrepancy}.}
We now define the SW discrepancy induced by pullback slicing. 
For $p\ge 1$, define the $p$-moment class on $(\mathcal{M},d_{\mathcal{M}})$ by
\begin{equation}
\mathcal{P}_p(\mathcal{M})
=\Bigl\{\mu\in\mathcal{P}(\mathcal{M}):\int_{\mathcal{M}} d_{\mathcal{M}}(x,x_0)^p,\mathrm{d}\mu(x)<\infty\Bigr\}.
\end{equation}

We then average one-dimensional Wasserstein distances between the push-forward measures of slicing coordinates \cref{eq:pem_coordinate} over random directions $a\in S_{\mathcal{E}}$.

\begin{definition}[Pullback Euclidean SW]
\label{def:pem_sw}
Let $p\ge 1$ and $\mu,\nu\in\mathcal{P}_p(\mathcal{M})$. The pullback Euclidean SW discrepancy is defined by
\begin{equation}
\label{eq:pem_sw_def}
\mathrm{PEMSW}_p(\mu,\nu)^p = \int_{S_{\mathcal{E}}} W_p^p \Bigl(t_a{}_{\#}\mu,\ t_a{}_{\#}\nu\Bigr), \mathrm{d}\lambda_{\mathcal{E}}(a).
\end{equation}
\end{definition}

The pullback structure $g=\phi^{\ast}_{g_{\mathcal{E}}}$ allows us to evaluate this discrepancy directly in the embedding space.
The slicing coordinate $t_a$ is linear in $\phi(x)$ up to a constant shift, as formalized below.


\begin{lemma}[Translation form of $t_a$]
\label[lemma]{lem:translation_projection}
For $a\in S_{\mathcal{E}}$, define $\pi_a:\mathcal{E}\to\mathbb{R}$ by $\pi_a(z)=\langle a,z\rangle_{\mathcal{E}}$.
Then for all $x\in\mathcal{M}$,
\begin{equation}
\label{eq:ta_shifted}
t_a(x)=\pi_a(\phi(x))-\pi_a(\phi(x_0)).
\end{equation}
Consequently, $t_a{}_\#\mu$ is a translation of $\pi_a{}_\#\tilde{\mu}$, and the same holds for $\nu$.
\end{lemma}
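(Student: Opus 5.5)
The plan is to start from the closed form of the slicing coordinate in \cref{thm:pem_projection}, namely $t_a(x)=\langle a,\phi(x)-\phi(x_0)\rangle_{\mathcal{E}}$, and then use bilinearity of the inner product on $\mathcal{E}$. Expanding in the second argument gives
\begin{equation*}
t_a(x)=\langle a,\phi(x)\rangle_{\mathcal{E}}-\langle a,\phi(x_0)\rangle_{\mathcal{E}}=\pi_a(\phi(x))-\pi_a(\phi(x_0)),
\end{equation*}
which is \cref{eq:ta_shifted}. This step is a one-line identity; the only point to record is that $\pi_a$ is a well-defined linear functional on $\mathcal{E}$ because $\mathcal{E}$ is a finite-dimensional inner-product space.

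For the push-forward statement, write $\tilde{\mu}=\phi_{\#}\mu$; this is the reading of the notation $\tilde{\mu}$, and the same convention applies to $\tilde{\nu}$. Set $c_a=\pi_a(\phi(x_0))\in\mathbb{R}$ and let $\tau_{c}:\mathbb{R}\to\mathbb{R}$ be $s\mapsto s-c$. The identity above then reads $t_a=\tau_{c_a}\circ\pi_a\circ\phi$ as maps $\mathcal{M}\to\mathbb{R}$.

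By functoriality of push-forwards, $(f\circ g)_{\#}=f_{\#}\circ g_{\#}$, which one checks on Borel sets $B$ via $(f\circ g)^{-1}(B)=g^{-1}(f^{-1}(B))$. This gives
\begin{equation*}
t_a{}_{\#}\mu=(\tau_{c_a})_{\#}\bigl(\pi_a{}_{\#}(\phi_{\#}\mu)\bigr)=(\tau_{c_a})_{\#}\bigl(\pi_a{}_{\#}\tilde{\mu}\bigr).
\end{equation*}
Thus $t_a{}_{\#}\mu$ is the translate of $\pi_a{}_{\#}\tilde{\mu}$ by $-c_a$. The same computation, with the same constant $c_a$, handles $\nu$.

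The main obstacle is not mathematical but notational: $\tilde{\mu}$ has not been defined before the statement, so I will fix $\tilde{\mu}=\phi_{\#}\mu$ explicitly. I will also note measurability, which holds because $\phi$ is a diffeomorphism (hence Borel) and $\pi_a$ and $\tau_{c_a}$ are continuous. Finally, I will remark that the translation constant $c_a$ depends only on $a$ and $x_0$, not on the measure. This shared shift is what later allows it to cancel inside $W_p^p$, by translation invariance of one-dimensional quantile differences.
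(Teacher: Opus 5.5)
Your proposal is correct and follows essentially the same route as the paper: expand $t_a(x)=\langle a,\phi(x)-\phi(x_0)\rangle_{\mathcal{E}}$ by linearity, write $t_a$ as a translation composed with $\pi_a\circ\phi$, and conclude by functoriality of push-forwards with $\tilde{\mu}=\phi_{\#}\mu$. The differences are cosmetic: you use $\tau_c(s)=s-c$ with constant $\pi_a(\phi(x_0))$ where the paper uses $\tau_c(s)=s+c$ with constant $-\pi_a(\phi(x_0))$, and you add a measurability remark that the paper omits.
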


Using the translation invariance of the one-dimensional Wasserstein distance, we obtain the following equivalence in \cref{thm:pem_sw_euclidean}.

\begin{theorem}[Equivalence to Euclidean SW~\citep{bonet2025sliced}]
\label{thm:pem_sw_euclidean}
For any $p\ge 1$ and $\mu,\nu\in\mathcal{P}_p(\mathcal{M})$, $\mathrm{PEMSW}_p$ and $W_p^{\mathcal{M}}$ can be evaluated in the embedding space $\mathcal{E}$ via $\tilde{\mu}=\phi_\#\mu$ and $\tilde{\nu}=\phi_\#\nu$:
\begin{align}
\mathrm{PEMSW}_p(\mu,\nu)=\mathrm{SW}_p(\tilde{\mu},\tilde{\nu}), \;\;\; W_p^{\mathcal{M}}(\mu,\nu)=W_p(\tilde{\mu},\tilde{\nu}).
\end{align}
\end{theorem}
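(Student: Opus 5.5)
The plan is to treat the two identities separately; both reduce to push-forward bookkeeping through the global diffeomorphism $\phi$.

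\textbf{The PEMSW identity.} Fix $a\in S_{\mathcal{E}}$. By \cref{lem:translation_projection}, $t_a=\tau_{c_a}\circ\pi_a\circ\phi$, where $c_a=\pi_a(\phi(x_0))$ and $\tau_{c}(s)=s-c$ is a translation of $\mathbb{R}$. Push-forwards compose, so
\begin{equation*}
t_a{}_\#\mu=\tau_{c_a}{}_\#\bigl(\pi_a{}_\#\tilde{\mu}\bigr),
\end{equation*}
and the same holds for $\nu$ with the \emph{same} shift $c_a$.

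Next I will check that the one-dimensional $W_p$ is invariant under a common translation. For any coupling $\gamma$ of $\alpha,\beta$, the measure $(\tau_c\times\tau_c)_\#\gamma$ couples $\tau_c{}_\#\alpha$ and $\tau_c{}_\#\beta$ with the same cost, since $|(s-c)-(s'-c)|=|s-s'|$. The reverse map also sends couplings to couplings, so the infima agree. Equivalently, one can use \cref{eq:1d-wasserstein}, since both quantile functions shift by $-c$.

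Hence $W_p^p(t_a{}_\#\mu,t_a{}_\#\nu)=W_p^p(\pi_a{}_\#\tilde\mu,\pi_a{}_\#\tilde\nu)$ for every $a$. Integrating over $\lambda_{\mathcal{E}}$ gives $\mathrm{PEMSW}_p^p(\mu,\nu)=\mathrm{SW}_p^p(\tilde\mu,\tilde\nu)$. Here I identify $\mathcal{E}$ with $\mathbb{R}^d$ via an orthonormal basis, so that $S_{\mathcal{E}}$, $\lambda_{\mathcal{E}}$ and $\pi_a$ become the usual sphere, uniform measure and projection $\langle\theta,\cdot\rangle$. Taking $p$-th roots finishes this part.

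\textbf{Moments.} Before either identity, I verify that the moment classes correspond. By \cref{eq:pullback_metric},
\begin{equation*}
\int d_{\mathcal{M}}(x,x_0)^p\,\mathrm{d}\mu=\int\|z-\phi(x_0)\|^p\,\mathrm{d}\tilde\mu(z).
\end{equation*}
So $\mu\in\mathcal{P}_p(\mathcal{M})$ if and only if $\tilde\mu\in\mathcal{P}_p(\mathcal{E})$, and all quantities are finite.

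\textbf{The Wasserstein identity.} Here $W_p^{\mathcal{M}}$ is understood with cost $d_{\mathcal{M}}^p$. The map $\Phi=\phi\times\phi$ is a bijection $\Pi(\mu,\nu)\to\Pi(\tilde\mu,\tilde\nu)$, $\gamma\mapsto\Phi_\#\gamma$. Its inverse is $(\phi^{-1}\times\phi^{-1})_\#$, which is well defined because $\phi$ is a global diffeomorphism, hence bi-measurable. The marginals transform correctly because $\mathrm{pr}_i\circ\Phi=\phi\circ\mathrm{pr}_i$. By the change of variables formula and \cref{eq:pullback_metric},
\begin{equation*}
\int d_{\mathcal{M}}(x,y)^p\,\mathrm{d}\gamma=\int\|z-w\|^p\,\mathrm{d}(\Phi_\#\gamma).
\end{equation*}
Taking infima over these corresponding coupling sets gives $W_p^{\mathcal{M}}(\mu,\nu)=W_p(\tilde\mu,\tilde\nu)$.

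The only genuinely delicate step is the coupling bijection in the Wasserstein identity: one must confirm that measurability and the marginal constraints transport in both directions. Everything else is routine, since $\phi$ is an isometry from $(\mathcal{M},d_{\mathcal{M}})$ onto $(\mathcal{E},\|\cdot\|)$.
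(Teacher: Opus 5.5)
Your proposal is correct. For the $\mathrm{PEMSW}$ identity it takes a different and more elementary route than the paper.

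\textbf{How the paper argues.} The paper does not factor the push-forward measures. It rewrites $W_p^p(t_a{}_\#\mu,t_a{}_\#\nu)$ as $\inf_{\gamma\in\Pi(\mu,\nu)}\int|t_a(x)-t_a(y)|^p\,\mathrm{d}\gamma$, using the push-forward representation of $W_p$ (Lemma 6 of Paty--Cuturi). It then cancels the constant $\langle a,\phi(x_0)\rangle_{\mathcal{E}}$ inside the difference and transfers couplings to $\Pi(\tilde\mu,\tilde\nu)$ through $(\phi\times\phi)_\#$. Lifting to couplings on $\mathcal{M}\times\mathcal{M}$ this way implicitly relies on a gluing/disintegration argument. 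It is needed once for $t_a$, and again to identify the final infimum with $W_p^p(\pi_a{}_\#\tilde\mu,\pi_a{}_\#\tilde\nu)$.

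\textbf{What each route buys.} Your factorization $t_a=\tau_{c_a}\circ\pi_a\circ\phi$, combined with translation invariance of $W_p$ on $\mathbb{R}$, stays with one-dimensional measures throughout. It needs nothing beyond composition of push-forwards. This is in fact the argument the main text alludes to just before the theorem, which the appendix proof then does not use. The paper's coupling formulation has the merit of matching its later proofs of the moment lower bound and the max-sliced comparison, which also work with couplings on $\mathcal{M}\times\mathcal{M}$.

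\textbf{Where you go further.} The paper's proof establishes only $\mathrm{PEMSW}_p(\mu,\nu)=\mathrm{SW}_p(\tilde\mu,\tilde\nu)$ and never argues $W_p^{\mathcal{M}}(\mu,\nu)=W_p(\tilde\mu,\tilde\nu)$. Your bijection $\gamma\mapsto(\phi\times\phi)_\#\gamma$, combined with the isometry $d_{\mathcal{M}}(x,y)=\|\phi(x)-\phi(y)\|_{\mathcal{E}}$, supplies that second identity cleanly. You also fill in details the paper leaves implicit: that $\mu\in\mathcal{P}_p(\mathcal{M})$ if and only if $\tilde\mu\in\mathcal{P}_p(\mathcal{E})$, and that an orthonormal basis carries $(S_{\mathcal{E}},\lambda_{\mathcal{E}})$ to the standard sphere with its uniform measure.
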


\cref{thm:pem_sw_euclidean} shows that $\mathrm{PEMSW}_p$ is exactly the Euclidean SW distance between the embedded measures.
This equivalence is the main computational simplification of the pullback construction: once manifold-valued samples are mapped by $\phi$, all slicing operations reduce to standard Euclidean projections.
As a result, the same sorting-based implementation used in Euclidean SW can be directly applied in the embedding space.

The slicing coordinate $t_a$ is defined relative to a reference point $x_0$.
We next show that this choice does not affect the value of $\mathrm{PEMSW}_p$.

\begin{proposition}[Reference-point invariance]
\label[proposition]{prop:pem_x0_invariance}
Let $p\ge 1$ and $x_0,x_1\in\mathcal{M}$.
For $a\in S_{\mathcal{E}}$, the slicing coordinates associated with $x_0$ and $x_1$ are defined as
\begin{equation}
t^{(x_i)}_a(x)=\langle a,\phi(x)-\phi(x_i)\rangle_{\mathcal{E}}, \quad i\in\{0,1\}.
\end{equation}
Then for all $\mu,\nu\in\mathcal{P}_p(\mathcal{M})$,
\begin{equation}
W_p\Bigl(t^{(x_0)}_a{}_\#\mu,\ t^{(x_0)}_a{}_\#\nu\Bigr)
=
W_p\Bigl(t^{(x_1)}_a{}_\#\mu,\ t^{(x_1)}_a{}_\#\nu\Bigr),
\quad \forall\, a\in S_{\mathcal{E}}.
\end{equation}
Hence, $\mathrm{PEMSW}_p$ is invariant to the choice of the reference point.
\end{proposition}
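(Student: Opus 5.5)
The plan is to show that changing the reference point only translates each sliced coordinate by a constant that does not depend on the sample. Since one-dimensional Wasserstein distances are invariant under a common translation of both measures, the result follows.

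\textbf{Step 1 (the shift).} Fix $a\in S_{\mathcal{E}}$. By linearity of the inner product, for every $x\in\mathcal{M}$,
\begin{equation*}
t^{(x_1)}_a(x)=\langle a,\phi(x)-\phi(x_0)\rangle_{\mathcal{E}}+\langle a,\phi(x_0)-\phi(x_1)\rangle_{\mathcal{E}}=t^{(x_0)}_a(x)+c_a,
\end{equation*}
where $c_a=\langle a,\phi(x_0)-\phi(x_1)\rangle_{\mathcal{E}}$ is independent of $x$. This is the same computation as in \cref{lem:translation_projection}.

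\textbf{Step 2 (push-forwards are translates).} Let $\tau_c(s)=s+c$ on $\mathbb{R}$. Step 1 gives $t^{(x_1)}_a=\tau_{c_a}\circ t^{(x_0)}_a$, so
\begin{equation*}
t^{(x_1)}_a{}_\#\mu=\tau_{c_a\#}\bigl(t^{(x_0)}_a{}_\#\mu\bigr),
\end{equation*}
and likewise for $\nu$. Both measures lie in $\mathcal{P}_p(\mathbb{R})$. Indeed, $|t_a(x)|\le\|\phi(x)-\phi(x_0)\|_{\mathcal{E}}=d_{\mathcal{M}}(x,x_0)$ by Cauchy--Schwarz and \cref{eq:pullback_metric}. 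For $x_1$ one additionally uses the triangle inequality: $d_{\mathcal{M}}(x,x_1)\le d_{\mathcal{M}}(x,x_0)+d_{\mathcal{M}}(x_0,x_1)$, so the moment condition is independent of the reference point.

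\textbf{Step 3 (translation invariance of $W_p$).} For $\alpha,\beta\in\mathcal{P}_p(\mathbb{R})$ we have $W_p(\tau_{c\#}\alpha,\tau_{c\#}\beta)=W_p(\alpha,\beta)$. One way to see this is through couplings: $(\tau_c\times\tau_c)_\#$ is a bijection $\Pi(\alpha,\beta)\to\Pi(\tau_{c\#}\alpha,\tau_{c\#}\beta)$, with inverse given by $\tau_{-c}$, and it preserves the cost $|x-y|^p$. Alternatively, use the quantile formula \cref{eq:1d-wasserstein} together with $F^{-1}_{\tau_{c\#}\alpha}=F^{-1}_\alpha+c$. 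Applying this with $c=c_a$ yields the claimed equality of sliced distances for every $a$.

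\textbf{Step 4 (conclusion).} Integrating the per-direction equality over $a$ against $\lambda_{\mathcal{E}}$ in \cref{def:pem_sw} shows that $\mathrm{PEMSW}_p$ is the same for $x_0$ and $x_1$.

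None of these steps is a real obstacle. The only points needing care are Step 3, where the same translation must be applied to both measures, and the moment check in Step 2, which guarantees that all quantities are finite.
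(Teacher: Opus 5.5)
Your proposal is correct and follows the paper's proof essentially step for step: the paper also writes $t^{(x_1)}_a=\tau_{c_a}\circ t^{(x_0)}_a$ with $c_a=\langle a,\phi(x_0)-\phi(x_1)\rangle_{\mathcal{E}}$, pushes both measures forward by this same translation, and invokes translation invariance of the one-dimensional $W_p$. Your extra checks are details the paper leaves implicit: finiteness of $p$-th moments via Cauchy--Schwarz and the triangle inequality, and the coupling or quantile justification of translation invariance.
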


\paragraph{\textbf{Theoretical properties}.} 
The equivalence in \cref{thm:pem_sw_euclidean} connects pullback slicing to the standard Euclidean SW distance. 
Thus, $\mathrm{PEMSW}_p$ inherits the metric, topological, comparison, and approximation properties of Euclidean SW through the embedding map $\phi$, as covered by the pullback-Euclidean case of \citet{bonet2025sliced}. 
Below, we focus on two consequences that are useful for interpreting the behavior of pullback slicing in the embedding space. 

First, pullback slicing is sensitive to first-order shifts of the embedded distributions. For each slicing direction, the one-dimensional Wasserstein distance controls the difference between the projected means; averaging over directions gives the following bound.

\begin{proposition}[Moment lower bound]
\label[proposition]{prop:pem_mean_lower}
For $\mu,\nu\in\mathcal{P}_p(\mathcal{M})$ with finite first moments of their embeddings
$\tilde{\mu}=\phi_\#\mu$ and $\tilde{\nu}=\phi_\#\nu$ in $\mathcal{E}$,
\begin{equation}
\label{eq:pem_mean_lower}
\mathrm{PEMSW}_p(\mu,\nu)^p
\ge
\int_{S_{\mathcal{E}}}
\bigl|\langle a,\ m_{\tilde{\mu}}-m_{\tilde{\nu}}\rangle_{\mathcal{E}}\bigr|^p\,
\mathrm{d}\lambda_{\mathcal{E}}(a)
=
\kappa_{D,p}\,\|m_{\tilde{\mu}}-m_{\tilde{\nu}}\|_{\mathcal{E}}^p,
\end{equation}
where $D=\dim(\mathcal{E})$ and $\kappa_{D,p}=\int_{S^{D-1}}|\theta_1|^p\,\mathrm{d}\lambda(\theta)$.
\end{proposition}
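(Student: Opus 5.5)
We reduce everything to the embedding space and then apply a one-dimensional inequality slice by slice. By \cref{thm:pem_sw_euclidean}, $\mathrm{PEMSW}_p(\mu,\nu)^p=\mathrm{SW}_p(\tilde{\mu},\tilde{\nu})^p=\int_{S_{\mathcal{E}}}W_p^p(\pi_a{}_\#\tilde{\mu},\pi_a{}_\#\tilde{\nu})\,\mathrm{d}\lambda_{\mathcal{E}}(a)$. Alternatively, we can work directly with $t_a$ and use \cref{lem:translation_projection}. The shift $\pi_a(\phi(x_0))$ is common to both measures, so it cancels in the means as well as in $W_p$. It therefore suffices to prove, for each fixed $a$, the pointwise bound
\begin{equation*}
W_p^p\bigl(\pi_a{}_\#\tilde{\mu},\pi_a{}_\#\tilde{\nu}\bigr)\ \ge\ \bigl|\langle a,\ m_{\tilde{\mu}}-m_{\tilde{\nu}}\rangle_{\mathcal{E}}\bigr|^p .
\end{equation*}
We then integrate this bound over $a$.

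\textbf{One-dimensional step.} Let $\gamma$ be any coupling of the two projected measures. Since $p\ge 1$, Jensen's inequality applied to the convex function $|\cdot|^p$ gives
\begin{equation*}
\int|s-u|^p\,\mathrm{d}\gamma(s,u)\ \ge\ \Bigl|\int (s-u)\,\mathrm{d}\gamma(s,u)\Bigr|^p .
\end{equation*}
The right-hand side equals $|m_1-m_2|^p$, where $m_1,m_2$ are the means of the projected measures. Finite first moments of $\tilde{\mu}$ and $\tilde{\nu}$ make these integrals well defined. By linearity of $\pi_a$, the mean of $\pi_a{}_\#\tilde{\mu}$ is $\langle a,m_{\tilde{\mu}}\rangle_{\mathcal{E}}$, and similarly for $\tilde{\nu}$. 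Taking the infimum over $\gamma$ yields the pointwise bound. Integrating over $a$ then gives the inequality in \cref{eq:pem_mean_lower}. If $W_p$ is infinite the inequality is trivial, but membership in $\mathcal{P}_p$ rules this out in any case.

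\textbf{Evaluating the constant.} Set $v=m_{\tilde{\mu}}-m_{\tilde{\nu}}$. If $v=0$ both sides vanish. Otherwise, write $v=\|v\|_{\mathcal{E}}\,e$ with $e$ a unit vector, so that $\int|\langle a,v\rangle|^p\,\mathrm{d}\lambda_{\mathcal{E}}=\|v\|^p\int|\langle a,e\rangle|^p\,\mathrm{d}\lambda_{\mathcal{E}}(a)$. Next, fix an orthonormal basis of $\mathcal{E}$ to identify $S_{\mathcal{E}}$ with $S^{D-1}$. The uniform measure is invariant under orthogonal maps, so we may choose a rotation sending $e$ to the first basis vector. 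This turns the remaining integral into $\int_{S^{D-1}}|\theta_1|^p\,\mathrm{d}\lambda(\theta)=\kappa_{D,p}$.

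I expect no real obstacle. The only points that need care are checking that the one-dimensional means exist, which follows from the finite-first-moment hypothesis, and justifying the rotation invariance of $\lambda_{\mathcal{E}}$ after identifying $\mathcal{E}$ isometrically with $\mathbb{R}^D$.
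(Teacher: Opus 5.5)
Your proposal is correct and follows the same route as the paper. Both arguments apply Jensen's inequality to an arbitrary coupling on each slice, cancel the reference term so the projected means become $\langle a, m_{\tilde{\mu}}-m_{\tilde{\nu}}\rangle_{\mathcal{E}}$, take the infimum over couplings, and integrate over $S_{\mathcal{E}}$. The differences are minor: you couple the projected measures on $\mathbb{R}$ directly, whereas the paper takes couplings in $\Pi(\mu,\nu)$ on $\mathcal{M}\times\mathcal{M}$. You also verify the constant $\kappa_{D,p}$ explicitly via rotation invariance of $\lambda_{\mathcal{E}}$, a step the paper delegates to a separate spherical lemma.
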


\cref{prop:pem_mean_lower} shows that $\mathrm{PEMSW}_p$ cannot ignore discrepancies between embedded first-order moments. Under PEMs, these embedded means correspond to Fréchet means on the manifold. In the CorSW instantiations below, this means that discrepancies between OLM- or LSM-induced Fréchet means necessarily contribute to the CorSW loss, while the full sliced discrepancy still compares complete projected distributions rather than only their means.

The second observation compares the averaged discrepancy with the largest discrepancy over all slicing directions.
While $\mathrm{PEMSW}_p$ averages one-dimensional Wasserstein distances over geodesic slices, some directions may reveal stronger distributional differences than others.
The following max-sliced form characterizes this largest projected discrepancy and relates it to both the averaged pullback SW distance and the full Wasserstein distance on $\mathcal{M}$.

\begin{proposition}[Max-sliced comparison]
\label[proposition]{prop:pem_maxsw}
The max-sliced pullback discrepancy can be defined by
\begin{equation}
\mathrm{PEM\text{-}MaxSW}_p(\mu,\nu)
=
\sup_{a\in S_{\mathcal{E}}}
W_p\bigl(t_a{}_\#\mu,\ t_a{}_\#\nu\bigr).
\end{equation}
Then, for all $\mu,\nu\in\mathcal{P}_p(\mathcal{M})$,
\begin{equation}
\mathrm{PEMSW}_p(\mu,\nu)
\le
\mathrm{PEM\text{-}MaxSW}_p(\mu,\nu)
\le
W_p^{\mathcal{M}}(\mu,\nu).
\end{equation}
\end{proposition}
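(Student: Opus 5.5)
The proof splits into the two inequalities. Throughout, I reduce to the embedding space using \cref{lem:translation_projection}, and to be safe I take $\mathrm{PEMSW}_p$ to mean the $p$-th root of \cref{eq:pem_sw_def}.

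\textbf{First inequality.} Since $\lambda_{\mathcal{E}}$ is a probability measure on $S_{\mathcal{E}}$, the integrand in \cref{eq:pem_sw_def} satisfies
\begin{equation*}
W_p^p\bigl(t_a{}_\#\mu,\ t_a{}_\#\nu\bigr)\le \mathrm{PEM\text{-}MaxSW}_p(\mu,\nu)^p
\end{equation*}
for every $a$. Integrating over $S_{\mathcal{E}}$ and taking $p$-th roots gives $\mathrm{PEMSW}_p\le\mathrm{PEM\text{-}MaxSW}_p$. If the supremum is infinite there is nothing to prove, but it will in fact be finite by the second inequality.

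\textbf{Second inequality.} Fix $a\in S_{\mathcal{E}}$. By \cref{lem:translation_projection}, $t_a=\pi_a\circ\phi-c_a$ with the constant $c_a=\pi_a(\phi(x_0))$. This map is $1$-Lipschitz from $(\mathcal{M},d_{\mathcal{M}})$ to $\mathbb{R}$, because by Cauchy--Schwarz and \cref{eq:pullback_metric},
\begin{equation*}
|t_a(x)-t_a(y)|=|\langle a,\phi(x)-\phi(y)\rangle_{\mathcal{E}}|\le\|a\|_{\mathcal{E}}\,\|\phi(x)-\phi(y)\|_{\mathcal{E}}=d_{\mathcal{M}}(x,y).
\end{equation*}
Now take any coupling $\gamma\in\Pi(\mu,\nu)$ on $\mathcal{M}\times\mathcal{M}$. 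Its push-forward $(t_a\times t_a)_\#\gamma$ is a coupling of $t_a{}_\#\mu$ and $t_a{}_\#\nu$, so
\begin{equation*}
W_p^p\bigl(t_a{}_\#\mu,\ t_a{}_\#\nu\bigr)\le\int|t_a(x)-t_a(y)|^p\,\mathrm{d}\gamma\le\int d_{\mathcal{M}}(x,y)^p\,\mathrm{d}\gamma.
\end{equation*}
Taking the infimum over $\gamma$ and then the supremum over $a$ gives $\mathrm{PEM\text{-}MaxSW}_p\le W_p^{\mathcal{M}}$. Here $W_p^{\mathcal{M}}$ is the Wasserstein distance for the cost $d_{\mathcal{M}}^p$.

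An equivalent route is to use \cref{thm:pem_sw_euclidean}, which identifies $W_p^{\mathcal{M}}(\mu,\nu)=W_p(\tilde\mu,\tilde\nu)$. One then applies the standard Euclidean fact that the 1-Lipschitz projections $\pi_a$ cannot increase $W_p$, together with translation invariance of one-dimensional $W_p$.

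\textbf{Main obstacle.} There is no deep difficulty. The only care needed is measure-theoretic: the push-forward coupling must be admissible and every quantity must be finite. Finiteness follows because $\mu,\nu\in\mathcal{P}_p(\mathcal{M})$ and $t_a$ is $1$-Lipschitz, so $t_a{}_\#\mu$ and $t_a{}_\#\nu$ have finite $p$-th moments.
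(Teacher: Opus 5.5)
Your proof is correct and follows the paper's argument. The first inequality is the pointwise bound by the supremum, integrated against the probability measure $\lambda_{\mathcal{E}}$; the second is the $1$-Lipschitz estimate $|t_a(x)-t_a(y)|\le d_{\mathcal{M}}(x,y)$ from Cauchy--Schwarz, applied under a coupling. The only difference is that you bound via an arbitrary coupling and then take the infimum, whereas the paper fixes an optimal coupling $\gamma^\star$ for $W_p^{\mathcal{M}}(\mu,\nu)$, so your version avoids needing existence of an optimal plan (which is harmless here anyway).
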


\cref{prop:pem_maxsw} places averaged pullback slicing between the largest one-dimensional sliced discrepancy and the full Wasserstein distance on $\mathcal{M}$. The first inequality follows from averaging versus taking a supremum, and the second follows from the $1$-Lipschitz property of each slicing coordinate under the pullback metric.

\paragraph{\textbf{Practical approximation}.} In practice, the integral in \cref{eq:pem_sw_def} is approximated by Monte Carlo sampling. Given $L$ independent directions $a_1,\ldots,a_L$ sampled uniformly from $S_{\mathcal{E}}$, we use
\begin{equation}
\label{eq:pem_sw_mc}
\widehat{\mathrm{PEMSW}}_{p,L}^p(\mu,\nu)
=
\frac{1}{L}
\sum_{\ell=1}^{L}
W_p^p
\Bigl(
t_{a_{\ell}}{}_{\#}\mu,\,
t_{a_{\ell}}{}_{\#}\nu
\Bigr).
\end{equation}
For empirical measures with the same number of samples, each one-dimensional Wasserstein distance is computed by sorting the projected values and matching the corresponding order statistics. This keeps the computation lightweight and makes the discrepancy suitable as an auxiliary training objective.

\subsection{Correlation SW under OLM}
\label{subsec:corsw_olm}

We first instantiate the pullback slicing construction on $\cor{n}$ endowed with OLM, using the operations summarized in \cref{tab:cor_manifold_ops}.
Under the OLM, the pullback map is $\phi_o=\offlog:\cor{n}\to \mathrm{Hol}(n)$ and the embedding space is $\mathrm{Hol}(n)$.
Since $\offlog(I_n)=0$, we fix the reference point as $x_0=I_n$.
Substituting this OLM embedding into the general slicing coordinate in \cref{thm:pem_projection} gives the following explicit form.

\begin{corollary}[Geodesic projection under OLM]
\label[corollary]{cor:olm_geodesic_projection}
Let $\mathcal{M}=\cor{n}$, $\phi_o=\offlog$, and $x_0=I_n$.
For any direction $A\in S_{\mathrm{Hol}}$ and any $C\in\cor{n}$, the corresponding slicing coordinate is given by
\begin{equation}
\label{eq:olm_pem_coordinate_cor}
t^{o}_{A}(C)=\langle A,\offlog(C)\rangle_F.
\end{equation}
\end{corollary}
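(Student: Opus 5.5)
This corollary is a direct specialization of \cref{thm:pem_projection}, so the plan is to verify its hypotheses in the OLM setting and then substitute.

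First, we place ourselves in the pullback-Euclidean framework of \cref{sec:pem_sw} with $\mathcal{M}=\cor{n}$, $\mathcal{E}=\mathrm{Hol}(n)$ and $\phi=\phi_o=\offlog$. Two facts are needed.

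\begin{itemize}
\item $\mathrm{Hol}(n)$ is a finite-dimensional inner-product space. It is a linear subspace of the symmetric matrices, closed under addition and scaling because symmetry and a zero diagonal are both linear constraints. We equip it with the Frobenius inner product $\langle\cdot,\cdot\rangle_F$ inherited from $\mathbb{R}^{n\times n}$. The unit sphere $S_{\mathcal{E}}$ is then exactly $S_{\mathrm{Hol}}=\{A\in\mathrm{Hol}(n):\|A\|_F=1\}$.
\item $\offlog$ is a global diffeomorphism onto $\mathrm{Hol}(n)$ with inverse $\offexp$, as recorded in \cref{tab:cor_manifold_ops} following \citet{thanwerdas2024permutation}. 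The OLM is by definition the pullback metric $\offlog^{\ast}g_F$, which is consistent with the distance formula $\dist^{o}$ in the table.
\end{itemize}

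Second, we check that the reference point $x_0=I_n$ is admissible. The identity is a full-rank correlation matrix, so $I_n\in\cor{n}$. Since $\mlog(I_n)=0$, we get $\offlog(I_n)=\off(0)=0$.

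Third, we apply \cref{thm:pem_projection} with any $A\in S_{\mathrm{Hol}}$ and any $C\in\cor{n}$. This gives $t_A(C)=\langle A,\offlog(C)-\offlog(I_n)\rangle_F=\langle A,\offlog(C)\rangle_F$. As a by-product, the projection itself takes the form $P^{\mathcal{G}_A}(C)=\offexp\bigl(t^{o}_A(C)\,A\bigr)$.

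The only step needing any care is the identification of the abstract inner product and unit sphere with the Frobenius structure on $\mathrm{Hol}(n)$. If $\mathrm{Hol}(n)$ were instead identified with $\mathbb{R}^{n(n-1)/2}$ via the strictly upper-triangular entries, the Frobenius norm would differ by a factor of $\sqrt{2}$. So I would state explicitly that the ambient Frobenius inner product is the one used, which matches the geodesic distance in \cref{tab:cor_manifold_ops}. Beyond this, the argument is pure substitution.
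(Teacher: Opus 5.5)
Your proposal is correct and matches the paper's proof: substitute $\phi=\offlog$, $x_0=I_n$ and the Frobenius inner product on $\mathrm{Hol}(n)$ into \cref{eq:pem_coordinate}, then use $\offlog(I_n)=0$. Your extra checks, namely that $\mathrm{Hol}(n)$ is a linear subspace, that $\offexp$ inverts $\offlog$, and the remark on the $\sqrt{2}$ normalization, are sound additions that the paper leaves implicit.
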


The coordinate $t^{o}_{A}(C)$ is the signed scalar projection of the OLM embedding of $C$ along the direction $A$.
Thus, geodesic slicing under OLM reduces to linear slicing of off-log representations in $\mathrm{Hol}(n)$.
This leads naturally to the definition of an SW discrepancy on $\cor{n}$ induced by the OLM geometry.

\begin{definition}[CorSW under OLM]
\label{def:corsw_olm}
Let $p\ge 1$ and $\mu,\nu\in\mathcal{P}_p(\cor{n})$.
The correlation SW distance under OLM is defined as
\begin{equation}
\label{eq:corsw_olm_def}
\mathrm{CorSW}^{o}_{p}(\mu,\nu)^{p}
=
\int_{S_{\mathrm{Hol}}}
W_p^{p}
\Bigl(
t^{o}_{A}{}_{\#}\mu,\,
t^{o}_{A}{}_{\#}\nu
\Bigr)
\,\mathrm{d}\lambda_{\mathrm{Hol}}(A),
\end{equation}
where
\begin{equation}
\label{eq:holm_unit_sphere}
S_{\mathrm{Hol}}
=
\{A\in\mathrm{Hol}(n):\|A\|_F=1\}
\end{equation}
is endowed with the uniform measure $\lambda_{\mathrm{Hol}}$.
\end{definition}

In empirical computation, $\mathrm{CorSW}^{o}_{p}$ is estimated by sampling directions from $S_{\mathrm{Hol}}$, projecting the points onto these directions, and computing one-dimensional Wasserstein distances between the projected samples.
Hence, the OLM geometry determines both the embedding representation and the admissible slicing directions.

\subsection{Correlation SW under LSM}
\label{subsec:corsw_lsm}

We then specialize the same pullback slicing construction to correlation manifolds $\cor{n}$ endowed with LSM.
Under the LSM, the pullback map is $\phi_{\star}=\lslog:\cor{n}\to \mathrm{Row}_0(n)$ and the embedding space is $\mathrm{Row}_0(n)$.
Since $\lslog(I_n)=0$, we again fix the reference point as $x_0=I_n$.
Compared with OLM, LSM uses a different embedding map and therefore gives a different linear representation of the same correlation matrix.
Substituting the LSM embedding into \cref{thm:pem_projection} yields the following slicing coordinate.

\begin{corollary}[Geodesic projection under LSM]
\label[corollary]{cor:lsm_geodesic_projection}
Let $\mathcal{M}=\cor{n}$, $\phi_{\star}=\lslog$, and $x_0=I_n$.
For any direction $A\in S_{\mathrm{Row}}$ and any $C\in\cor{n}$, the slicing coordinate is given by
\begin{equation}
\label{eq:lsm_pem_coordinate_cor}
t^{\star}_{A}(C)=\langle A,\lslog(C)\rangle_F.
\end{equation}
\end{corollary}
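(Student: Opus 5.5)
The corollary is a direct specialization of \cref{thm:pem_projection}. We take $\mathcal{M}=\cor{n}$, the embedding space $\mathcal{E}=\mathrm{Row}_0(n)$ with the Frobenius inner product $\langle\cdot,\cdot\rangle_F$, the diffeomorphism $\phi_\star=\lslog$, and the reference point $x_0=I_n$. The unit sphere $S_{\mathcal{E}}$ is then exactly $S_{\mathrm{Row}}=\{A\in\mathrm{Row}_0(n):\|A\|_F=1\}$.

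First we check the hypotheses of \cref{thm:pem_projection}.
\begin{itemize}
\item \textbf{Pullback-Euclidean structure.} $\lslog$ is a global diffeomorphism from $\cor{n}$ onto the linear space $\mathrm{Row}_0(n)$, with inverse $\lsexp$ (\cref{tab:cor_manifold_ops}, \citep{thanwerdas2024permutation}). So LSM is the pullback metric $\phi_\star^{\ast}g_F$.
\item \textbf{Inner-product space.} $\mathrm{Row}_0(n)$ is a linear subspace of the symmetric matrices, since symmetry and $X\mathds{1}=\mathbf{0}$ are both linear conditions. Hence the restricted Frobenius inner product makes it a finite-dimensional inner-product space.
\end{itemize}

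Next we verify $\lslog(I_n)=0$. For $C=I_n$, the equation $Cx=1/x$ from \cref{eq:lsm_log_newton} becomes $x_i=1/x_i$. Its unique positive solution is $x=\mathds{1}$, so $\mathcal{D}^\star(I_n)=I_n$. Therefore $\lslog(I_n)=\mlog(I_n)=0$.

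Finally we apply \cref{eq:pem_coordinate}. This gives
\begin{equation*}
t^{\star}_A(C)=\langle A,\lslog(C)-\lslog(I_n)\rangle_F=\langle A,\lslog(C)\rangle_F,
\end{equation*}
which is \cref{eq:lsm_pem_coordinate_cor}. By \cref{eq:pem_projection}, the projection onto the geodesic line is $\lsexp\bigl(t^\star_A(C)\,A\bigr)$.

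The only step needing real care is the identification of $\lslog$ as a global diffeomorphism onto $\mathrm{Row}_0(n)$. We do not reprove this; we invoke the result of \citet{thanwerdas2024permutation}, who also establish existence and uniqueness of the positive scaling $\mathcal{D}^\star(C)$. Everything else is substitution.
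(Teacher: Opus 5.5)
Your proposal is correct and follows the paper's own proof: you specialize the PEM slicing coordinate $t_a(x)=\langle a,\phi(x)-\phi(x_0)\rangle_{\mathcal{E}}$ to $\phi_\star=\lslog$, $\mathcal{E}=\mathrm{Row}_0(n)$ with the Frobenius inner product, and $x_0=I_n$, and then use $\lslog(I_n)=0$. Your explicit check that $\mathcal{D}^\star(I_n)=I_n$ (from $x_i=1/x_i$ with $x_i>0$) is a small verification the paper asserts without proof.
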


The coordinate $t^{\star}_{A}(C)$ is the signed scalar projection of the LSM embedding of $C$ along the direction $A$.
Thus, geodesic slicing under LSM reduces to linear slicing of log-scaled representations in $\mathrm{Row}_0(n)$.
The corresponding LSM-induced CorSW discrepancy is defined as follows.

\begin{figure}[t]
    \centering    \includegraphics[width=0.99\linewidth]{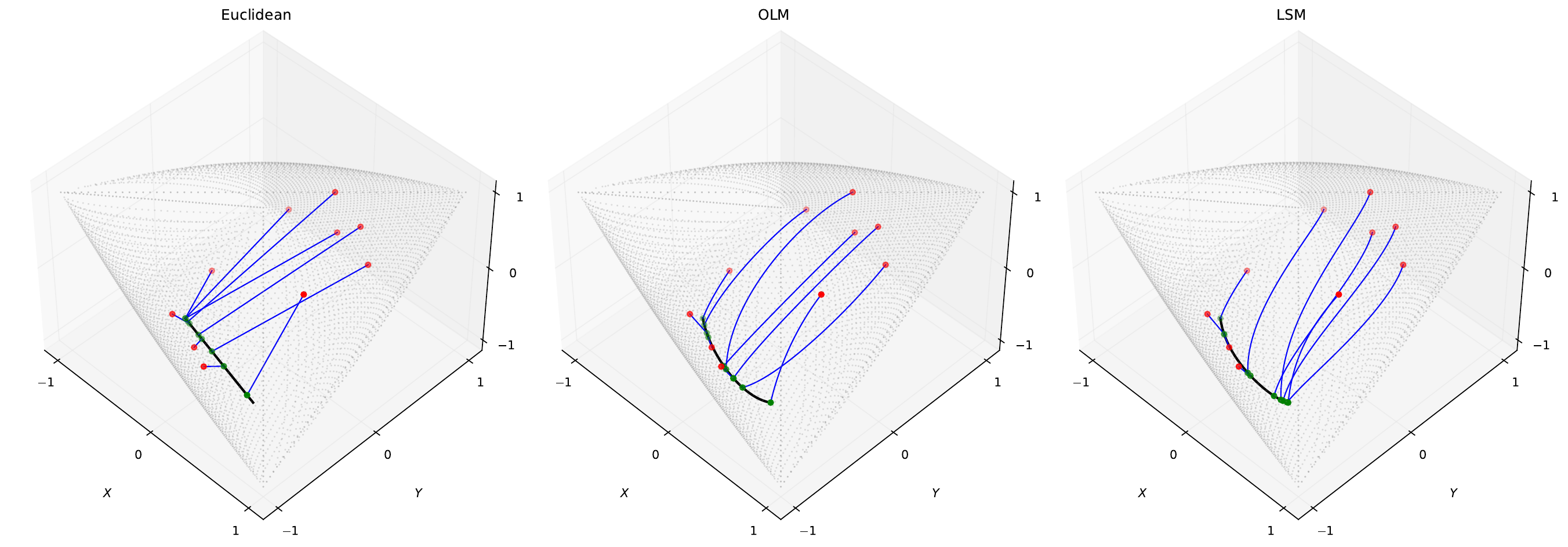}
    \caption{Visualization of geodesic and Euclidean projections of a $3\times3$ correlation matrix. Geodesic projections under OLM and LSM are compared with the Euclidean projection.}
    \label{fig:cor_proj_vis}
\end{figure}

\begin{definition}[CorSW under LSM]
\label{def:corsw_lsm}
Let $p\ge 1$ and $\mu,\nu\in\mathcal{P}_p(\cor{n})$.
The correlation SW distance under LSM is defined by
\begin{equation}
\label{eq:corsw_lsm_def}
\mathrm{CorSW}^{\star}_{p}(\mu,\nu)^{p}
=
\int_{S_{\mathrm{Row}}}
W_p^{p}
\Bigl(
t^{\star}_{A}{}_{\#}\mu,\,
t^{\star}_{A}{}_{\#}\nu
\Bigr)
\,\mathrm{d}\lambda_{\mathrm{Row}}(A),
\end{equation}
where
\begin{equation}
\label{eq:row_unit_sphere}
S_{\mathrm{Row}}
=
\{A\in\mathrm{Row}_0(n):\|A\|_F=1\}
\end{equation}
is endowed with the uniform measure $\lambda_{\mathrm{Row}}$.
\end{definition}

Both $\mathrm{CorSW}^{o}_{p}$ and $\mathrm{CorSW}^{\star}_{p}$ follow the same computational principle: correlation matrices are first mapped to a geometry-specific embedding space, then sliced by random directions, and finally compared through one-dimensional Wasserstein distances.
The difference lies in the chosen correlation geometry, which determines whether the sliced representation is given by $\offlog$ or $\lslog$.
To provide geometric intuition, \cref{fig:cor_proj_vis} visualizes geodesic projections of a $3\times3$ correlation matrix under OLM and LSM metrics, together with Euclidean projections, illustrating how different geometries induce distinct projection behaviors on correlation manifolds.

\section{Experiments}
In this section, we evaluate the proposed CorSW framework across three EEG decoding paradigms, examining its predictive performance, robustness to cross-session domain shifts, key design choices, training efficiency, and neurophysiological interpretability.

\subsection{Domain Generalization with CorSW}

\DontPrintSemicolon
\begin{algorithm}[t]
\caption{Correlation SW domain generalization loss}
\label{alg:cswd}
\KwIn{
batch $\mathcal{B}=\{C_i\in\cor{n},\ d(i)\in\mathcal{D}_{\mathcal{B}}\}_{i=1}^M$;
$g\in\{\mathrm{o},\star\}$;
number of slices $S$;
order $p=2$
}

Initialize $\mathcal{L}_{\mathrm{swd}}\leftarrow 0$\;

\For{each domain $d \in \mathcal{D}_{\mathcal{B}}$}{

    $\mathcal{C}^d \leftarrow \{C_i \mid d(i)=d\}$

    $\widetilde{\mathbf{Z}}^d \sim \mathcal{N}(0,I)\ \text{with shape}(\mathcal{C}^d)$ 

    $\mathbf{Z}^d \leftarrow \Pi_{\mathcal{E}_g}(\widetilde{\mathbf{Z}}^d)$ // Project to $\mathcal{E}_g$

    $\mathbf{Q}^d \leftarrow
    \Exp^{g} (\frac{\mathbf{Z}^d}{\|\mathbf{Z}^d\|_\mathrm{F}+\epsilon})
    \subset\cor{n}$ // Reference distribution

    Initialize $\ell^d \leftarrow 0$

    \For{$s=1$ \KwTo $S$}{
        $A_s \sim \mathrm{Unif}(\mathbb{S}_{\mathcal{E}})$ // Uniform unit slicing direction
        
        $u_s^{d} \leftarrow t_{A_s}^{g}(\mathcal{C}^d), \; v_s^{d} \leftarrow t_{A_s}^{g}(\mathcal{Q}^d)$ // 1D slicing along $A_s$
        
        $\ell^d \leftarrow \ell^d + W_p^p\!\left(u_s^{d},\, v_s^{d}\right)$
    }
    $\mathcal{L}_{\mathrm{swd}} \leftarrow \mathcal{L}_{\mathrm{swd}} + \ell^d / S$
}
\Return $\mathcal{L}_{\mathrm{swd}}$
\end{algorithm}

Following~\cite{li2026heegnet}, we adopt CorSW as a source-only domain generalization (DG) objective by aligning correlation representations of each source domain to a common reference law.
For a source domain $d$, let $\mathcal{C}^d \subset \cor{n}$ denote the empirical distribution of its feature-level correlation matrices.
Rather than matching all source domains pairwise, CorSW aligns each $\mathcal{C}^d$ to the same domain-agnostic reference law, which avoids selecting a particular source as the alignment target and keeps the loss linear in the number of source domains. This reference-based formulation is well suited to source-only DG, since it avoids treating any observed source domain as a surrogate target for unseen domains.
Moreover, constructing the reference law in the geometry-specific embedding space ensures that the sampled target distribution is compatible with the selected OLM or LSM geometry after being mapped back to $\cor{n}$.

The reference samples are generated in the geometry-specific embedding subspace $\mathcal{E}_g$, with $\mathcal{E}_{\mathrm{o}}=\mathrm{Hol}(n)$ for OLM and $\mathcal{E}_{\star}=\mathrm{Row}_0(n)$ for LSM.
Specifically, Gaussian matrix samples are first projected onto $\mathcal{E}_g$ and then mapped back to $\cor{n}$ through $\Exp^g$, ensuring valid inputs for the selected geometry.
In each mini-batch, an independent reference set $\mathcal{Q}^d$ with the same cardinality as $\mathcal{C}^d$ is sampled for stable one-dimensional Wasserstein matching. The discrepancy between $\mathcal{C}^d$ and $\mathcal{Q}^d$ is measured by CorSW through random geodesic slicing.
For each direction $A_s\sim\mathrm{Unif}(\mathbb{S}_{\mathcal{E}_g})$, both distributions are projected by $t_{A_s}^g$ and compared using the $p$-Wasserstein distance.
The final loss averages over slicing directions and sums across source domains, encouraging domain-invariant representations while preserving correlation-manifold geometry.
The overall procedure is summarized in~\cref{alg:cswd}.

\subsection{Datasets and Preprocessing}

We conduct experiments on three representative BCI paradigms that cover both time-asynchronous and time-synchronous EEG decoding scenarios, each exhibiting distinct neurophysiological characteristics.
Specifically, we evaluate motor imagery (MI) decoding on the BCIC-IV-2a dataset~\cite{brunner2008bci}, which represents a typical time-asynchronous setting, and steady-state visual evoked potential (SSVEP) decoding on the MAMEM-SSVEP-II dataset~\cite{mamem} as well as error-related negativity (ERN) decoding on the BCI-ERN dataset~\cite{bciern}, both of which correspond to time-synchronous EEG paradigms. This diverse experimental setup allows us to comprehensively assess the effectiveness and generality of the proposed method across different temporal structures and decoding regimes.

\paragraph{\textbf{MI}}
The BCIC-IV-2a dataset~\cite{brunner2008bci} is a widely used benchmark for MI decoding.
It consists of EEG recordings from nine subjects performing four-class MI tasks, namely left hand, right hand, feet, and tongue movements.
Each subject participated in two recording sessions, and each trial has a duration of four seconds.
The dataset contains 22 EEG channels and 3 EOG channels sampled at 250~Hz.
Classification accuracy is used as the evaluation metric.

\paragraph{\textbf{SSVEP}}
The MAMEM-SSVEP-II dataset~\cite{mamem} contains EEG recordings from eleven subjects, each recorded over five sessions.
In each session, subjects were instructed to visually attend to flickering stimuli at five target frequencies
(6.66, 7.50, 8.57, 10.00, and 12.00~Hz), yielding one trial per frequency.
Each session comprises five trials per subject, corresponding to the five stimulation frequencies.
We use the 1--5 second interval after stimulus onset and split each trial into four non-overlapping one-second segments for model input.
This produces 20 input segments per session and 100 segments per subject across all sessions.
We report classification accuracy.

\paragraph{\textbf{ERN}}
The BCI-ERN dataset~\cite{bciern} originates from a Kaggle BCI Challenge and includes EEG recordings from 26 participants engaged in a P300-based spelling task.
ERN signals are elicited following incorrect feedback, leading to a binary classification problem with a pronounced class imbalance.
This task evaluates the detection of erroneous feedback from event-related EEG responses.
To account for this imbalance, performance is evaluated using the area under the receiver operating characteristic curve (AUC).

\begin{table}[!t]
\centering
\caption{Average task performance ($\pm$ std., \%) over 10 runs for baseline and CorAtt models with and without CorSW.}
\label{tab:result_main}
\resizebox{\linewidth}{!}
{
\begin{tabular}{lccc}
\toprule
\textbf{Models} &  \textbf{MI} & \textbf{SSVEP} & \textbf{ERN} \\
\midrule
\rowcolor{gray!15} \multicolumn{4}{l}{ \small \textit{Euclidean methods}} \\
EEGNet~\cite{eegnet}          & $61.84 \pm 6.39$ & $53.72 \pm 7.23$  & $74.28 \pm 2.47$ \\
ShallowCNet~\cite{matt18}     & $57.43 \pm 6.25$ & $56.93 \pm 6.97$  & $71.86 \pm 2.64$ \\
EEG-TCNet~\cite{eeg-tcnet}       & $67.09 \pm 4.66$ & $55.45 \pm 7.66$  & $77.05 \pm 2.46$ \\
FBCNet~\cite{matt53}          & $56.52 \pm 3.07$ & $53.09 \pm 5.67$  & $60.47 \pm 3.06$ \\
MBEEGSE~\cite{matt50}         & $64.58 \pm 6.07$ & $56.45 \pm 7.27$  & $75.46 \pm 2.34$ \\
\midrule 
\rowcolor{gray!15} \multicolumn{4}{l}{ \small \textit{Riemannian methods}} \\
MAtt~\cite{matt}            & $74.71 \pm 5.01$ & $65.19 \pm 3.14$  & $75.68 \pm 2.23$ \\
GDLNet~\cite{gdlnet}          & $69.32 \pm 2.89$ & $65.52 \pm 2.86$  & $78.23 \pm 2.52$ \\
CorAtt-OLM~\cite{hu2025correlation} & $75.01 \pm 2.78$ & $67.39 \pm 3.22$ & $78.78 \pm 3.40$ \\
\textbf{CorAtt-OLM+CorSW-OLM } &  $75.49 \pm 2.34$ & $\underline{68.58 \pm 1.01}$ & $\mathbf{81.33 \pm 1.14}$ \\
CorAtt-LSM~\cite{hu2025correlation} & $74.47 \pm 2.43$ & $67.74 \pm 2.44$ & $78.63 \pm 3.31$ \\
\textbf{CorAtt-LSM+CorSW-LSM }  & $74.98 \pm 2.67$ & $\mathbf{68.60 \pm 1.62}$ & $\underline{80.07 \pm 0.89}$ \\
CorAtt-MIX~\cite{hu2025correlation} & $\underline{75.56 \pm 1.58}$ & $68.27 \pm 2.50$ & $79.04 \pm 2.91$ \\
\textbf{CorAtt-MIX+CorSW-OLM } & $\mathbf{75.58 \pm 1.23}$ & $68.51 \pm 1.30$ & $79.90 \pm 2.44$ \\
\bottomrule
\end{tabular}
}
\end{table}

\begin{table*}[!t]
\centering
\caption{Cross-session DG results (\%) on MI, SSVEP, and ERN tasks.}
\resizebox{\linewidth}{!}{
\begin{tabular}{l cc ccccc ccccc c}
\toprule
\multirow{2}{*}{\textbf{Models}}
& \multicolumn{2}{c}{\textbf{MI}}
& \multicolumn{5}{c}{\textbf{SSVEP}}
& \multicolumn{5}{c}{\textbf{ERN}} & \multirow{2}{*}{\textbf{Mean}} \\
\cmidrule(lr){2-3} \cmidrule(lr){4-8} \cmidrule(lr){9-13}
& S1$\rightarrow$S2 & S2$\rightarrow$S1
& S1234$\rightarrow$S5 & S2345$\rightarrow$S1 & S1345$\rightarrow$S2 & S1245$\rightarrow$S3 & S1235$\rightarrow$S4
& S1234$\rightarrow$S5 & S2345$\rightarrow$S1 & S1345$\rightarrow$S2 & S1245$\rightarrow$S3 & S1235$\rightarrow$S4 \\
\midrule
CorAtt-OLM
& $70.53 \pm 2.12$ & $74.71 \pm 2.40$ & $67.13 \pm 0.67$ & $66.55 \pm 1.05$ & $66.34 \pm 0.76$ & $67.94 \pm 0.79$ & $\mathbf{69.79 \pm 0.85}$ & $76.67 \pm 0.44$ & $79.73 \pm 0.91$ & $81.77 \pm 0.82$ & $80.56 \pm 0.36$ & $81.07 \pm 0.53$
& $73.57$ \\
+ IW~\cite{Choi_2021_CVPR}
& $70.85 \pm 2.44$ & $74.95 \pm 2.06$ & $67.58 \pm 0.71$ & $67.02 \pm 1.30$ & \underline{$67.05 \pm 0.68$} & \underline{$68.51 \pm 1.01$} & $68.00 \pm 0.87$ & \underline{$77.35 \pm 0.37$} & \underline{$82.44 \pm 1.09$} & $84.61 \pm 0.73$ & \underline{$83.50 \pm 0.37$} & $83.73 \pm 0.64$
& $74.63$ \\
+ DAC-SC~\cite{lee2023decompose}
& $69.86 \pm 2.54$ & $74.18 \pm 2.25$ & $66.39 \pm 0.71$ & $65.96 \pm 0.92$ & $65.83 \pm 0.89$ & $67.67 \pm 0.94$ & $69.43 \pm 0.90$ & $76.00 \pm 0.48$ & $81.58 \pm 0.95$ & $83.36 \pm 0.91$ & $81.66 \pm 0.45$ & $82.61 \pm 0.66$
& $73.71$ \\
+ RIW~\cite{lee2023decompose}
& $70.64 \pm 2.42$ & $74.91 \pm 2.36$ & $67.34 \pm 0.75$ & $66.83 \pm 1.15$ & $66.60 \pm 0.67$ & $68.11 \pm 0.98$ & $68.76 \pm 0.75$ & $76.75 \pm 0.50$ & $82.25 \pm 1.02$ & $84.37 \pm 0.80$ & $83.16 \pm 0.43$ & $83.65 \pm 0.56$
& $74.45$ \\
+ DRIW~\cite{bi2025learning}
& \underline{$70.96 \pm 2.55$} & \underline{$75.01 \pm 2.25$} & \underline{$67.64 \pm 0.76$} & \underline{$67.04 \pm 1.07$} & $66.82 \pm 0.83$ & $68.42 \pm 0.99$ & \underline{$69.44 \pm 0.75$} & $77.15 \pm 0.54$ & \underline{$82.44 \pm 0.94$} & \underline{$84.66 \pm 0.83$} & \underline{$83.50 \pm 0.44$} & \underline{$83.86 \pm 0.59$}
& \underline{$74.75$} \\
+ \textbf{CorSW-OLM }
& $\mathbf{71.31 \pm 2.26}$ & $\mathbf{75.49 \pm 2.34}$ & $\mathbf{68.58 \pm 1.01}$ & $\mathbf{68.51 \pm 1.00}$ & $\mathbf{70.26 \pm 1.05}$ & $\mathbf{71.49 \pm 0.75}$ & $69.05 \pm 1.77$ & $\mathbf{81.33 \pm 1.14}$ & $\mathbf{82.49 \pm 2.67}$ & $\mathbf{85.03 \pm 0.41}$ & $\mathbf{84.55 \pm 0.25}$ & $\mathbf{84.11 \pm 0.87}$
& $\mathbf{76.02}$ \\
& {\color{gain}$+0.78$} & {\color{gain}$+0.78$} & {\color{gain}$+1.45$} & {\color{gain}$+1.96$} & {\color{gain}$+3.92$} & {\color{gain}$+3.55$} & {\color{gain}$-0.74$} & {\color{gain}$+4.66$} & {\color{gain}$+2.76$} & {\color{gain}$+3.26$} & {\color{gain}$+3.99$} & {\color{gain}$+3.04$} & {\color{gain}$+2.45$} \\
\midrule
CorAtt-LSM
& $69.72 \pm 2.42$ & $73.46 \pm 2.15$ & $67.52 \pm 1.30$ & $66.88 \pm 0.96$ & $66.91 \pm 1.05$ & $68.35 \pm 0.52$ & $69.89 \pm 1.02$ & $76.49 \pm 0.82$ & $79.35 \pm 0.61$ & $82.26 \pm 0.36$ & $80.40 \pm 0.28$ & $80.47 \pm 0.61$
& $73.47$ \\
+ IW~\cite{Choi_2021_CVPR}
& \underline{$70.08 \pm 2.28$} & $73.85 \pm 2.19$ & $67.81 \pm 1.20$ & $67.11 \pm 1.02$ & $67.12 \pm 1.12$ & $68.67 \pm 0.46$ & \underline{$70.31 \pm 1.12$} & $76.94 \pm 0.90$ & $82.06 \pm 0.55$ & $84.84 \pm 0.40$ & $82.95 \pm 0.31$ & $83.12 \pm 0.55$
& $74.57$ \\
+ DAC-SC~\cite{lee2023decompose}
& $69.01 \pm 2.67$ & $72.96 \pm 2.01$ & $66.87 \pm 1.37$ & $66.07 \pm 1.09$ & $66.28 \pm 1.06$ & $67.65 \pm 0.53$ & $69.29 \pm 0.96$ & $75.69 \pm 0.79$ & $81.22 \pm 0.57$ & $84.03 \pm 0.31$ & $82.20 \pm 0.28$ & $82.29 \pm 0.55$
& $73.63$ \\
+ RIW~\cite{lee2023decompose}
& $69.94 \pm 2.66$ & $73.69 \pm 2.12$ & $67.71 \pm 1.28$ & $67.05 \pm 0.98$ & $67.09 \pm 1.21$ & $68.55 \pm 0.53$ & $70.19 \pm 0.95$ & $76.83 \pm 0.90$ & $81.96 \pm 0.58$ & $84.79 \pm 0.36$ & $82.88 \pm 0.33$ & $83.06 \pm 0.64$
& $74.48$ \\
+ DRIW~\cite{bi2025learning}
& $70.03 \pm 2.18$ & $\mathbf{74.01 \pm 2.30}$ & \underline{$67.99 \pm 1.33$} & \underline{$67.28 \pm 1.04$} & \underline{$67.33 \pm 0.97$} & \underline{$68.88 \pm 0.45$} & $\mathbf{70.53 \pm 1.07}$ & \underline{$77.18 \pm 0.94$} & $\mathbf{82.17 \pm 0.66}$ & \underline{$85.03 \pm 0.38$} & \underline{$83.12 \pm 0.29$} & \underline{$83.29 \pm 0.70$}
& \underline{$74.74$} \\
+ \textbf{CorSW-LSM }
& $\mathbf{70.23 \pm 2.87}$ & \underline{$74.98 \pm 2.67$} & $\mathbf{68.60 \pm 1.62}$ & $\mathbf{68.54 \pm 0.66}$ & $\mathbf{69.48 \pm 1.19}$ & $\mathbf{72.23 \pm 0.27}$ & $68.16 \pm 0.99$ & $\mathbf{80.07 \pm 0.89}$ & \underline{$82.07 \pm 2.04$} & $\mathbf{85.71 \pm 0.88}$ & $\mathbf{84.31 \pm 1.27}$ & $\mathbf{83.36 \pm 0.48}$
& $\mathbf{75.64}$ \\
& {\color{gain}$+0.51$} & {\color{gain}$+1.52$} & {\color{gain}$+1.08$} & {\color{gain}$+1.66$} & {\color{gain}$+2.57$} & {\color{gain}$+3.88$} & {\color{gain}$-1.73$} & {\color{gain}$+3.58$} & {\color{gain}$+2.72$} & {\color{gain}$+3.45$} & {\color{gain}$+3.91$} & {\color{gain}$+2.89$} & {\color{gain}$+2.17$} \\
\midrule
CorAtt-MIX
& $70.12 \pm 2.26$ & $75.46 \pm 2.16$ & $68.20 \pm 1.16$ & $66.42 \pm 0.87$ & $66.33 \pm 0.80$ & $68.45 \pm 1.05$ & $69.61 \pm 0.98$ & $75.60 \pm 0.32$ & $79.14 \pm 3.09$ & $81.56 \pm 0.34$ & $80.30 \pm 0.17$ & $79.74 \pm 0.43$
& $73.41$ \\
+ IW~\cite{Choi_2021_CVPR}
& $70.26 \pm 2.52$ & $75.64 \pm 2.18$ & $68.35 \pm 1.07$ & $66.62 \pm 0.77$ & $66.55 \pm 0.73$ & $68.74 \pm 1.28$ & $69.80 \pm 0.94$ & $75.99 \pm 0.37$ & $81.87 \pm 2.75$ & $84.10 \pm 0.38$ & $82.88 \pm 0.20$ & $82.33 \pm 0.45$
& $74.43$ \\
+ DAC-SC~\cite{lee2023decompose}
& $69.44 \pm 2.35$ & $74.77 \pm 2.43$ & $67.55 \pm 1.23$ & $65.85 \pm 0.83$ & $65.80 \pm 0.85$ & $67.76 \pm 1.21$ & $68.93 \pm 0.99$ & $74.92 \pm 0.35$ & $80.88 \pm 2.93$ & $83.39 \pm 0.35$ & $82.12 \pm 0.19$ & $81.41 \pm 0.47$
& $73.57$ \\
+ RIW~\cite{lee2023decompose}
& $70.28 \pm 2.37$ & \underline{$75.71 \pm 2.50$} & \underline{$68.43 \pm 1.29$} & $66.65 \pm 0.85$ & $66.56 \pm 0.95$ & $68.82 \pm 1.12$ & \underline{$69.86 \pm 1.04$} & $75.98 \pm 0.35$ & \underline{$81.93 \pm 2.61$} & $84.13 \pm 0.39$ & $82.93 \pm 0.18$ & \underline{$82.40 \pm 0.49$}
& $74.47$ \\
+ DRIW~\cite{bi2025learning}
& \underline{$70.54 \pm 2.43$} & $\mathbf{75.96 \pm 2.36}$ & $\mathbf{68.64 \pm 1.24}$ & \underline{$66.84 \pm 0.79$} & \underline{$66.77 \pm 0.73$} & \underline{$69.00 \pm 1.11$} & $\mathbf{70.06 \pm 1.13}$ & \underline{$76.25 \pm 0.36$} & $\mathbf{82.03 \pm 2.67}$ & \underline{$84.33 \pm 0.40$} & \underline{$83.08 \pm 0.22$} & $\mathbf{82.56 \pm 0.53}$
& \underline{$74.67$} \\
+ \textbf{CorSW-OLM }
& $\mathbf{70.88 \pm 2.26}$ & $75.58 \pm 2.16$ & $68.51 \pm 1.30$ & $\mathbf{67.95 \pm 0.59}$ & $\mathbf{69.71 \pm 0.67}$ & $\mathbf{71.73 \pm 0.31}$ & $68.26 \pm 0.52$ & $\mathbf{79.90 \pm 1.11}$ & $81.66 \pm 1.39$ & $\mathbf{85.60 \pm 0.88}$ & $\mathbf{83.38 \pm 1.47}$ & $82.24 \pm 0.69$
& $\mathbf{75.45}$ \\
& {\color{gain}$+0.76$} & {\color{gain}$+0.12$} & {\color{gain}$+0.31$} & {\color{gain}$+1.53$} & {\color{gain}$+3.38$} & {\color{gain}$+3.28$} & {\color{gain}$-1.35$} & {\color{gain}$+4.30$} & {\color{gain}$+2.52$} & {\color{gain}$+4.04$} & {\color{gain}$+3.08$} & {\color{gain}$+2.50$} & {\color{gain}$+2.04$} \\
\bottomrule
\end{tabular}
}
\label{tab:dg_results}
\end{table*}

\begin{table}[!t]
\centering
\caption{Cross-session comparison with manifold-based EEG adaptation baselines.}
\label{tab:manifold_eeg_baselines}
\resizebox{\linewidth}{!}{
\begin{tabular}{lccc}
\toprule
\textbf{Method} & \textbf{MI} & \textbf{SSVEP} & \textbf{ERN} \\
\midrule
SPDDSMBN~\cite{kobler2022spd} & $69.38 \pm 2.11$ & $64.72 \pm 0.95$ & $74.86 \pm 0.88$ \\
Deep OT-SPD~\cite{ju2022deep} & $69.91 \pm 2.35$ & $63.85 \pm 1.02$ & $75.43 \pm 1.12$ \\
SPDIM~\cite{ICLR2025_fc6cd575} & $70.52 \pm 2.08$ & $65.31 \pm 0.89$ & $76.12 \pm 0.94$ \\
CorAtt-OLM & \underline{$72.62 \pm 2.26$} & \underline{$67.15 \pm 0.82$} & \underline{$80.78 \pm 0.66$} \\
CorAtt+CorSW & $\mathbf{73.40 \pm 2.30}$ & $\mathbf{69.78 \pm 1.15}$ & $\mathbf{83.30 \pm 1.05}$ \\
\bottomrule
\end{tabular}
}
\end{table}

\subsection{Experimental Setup}

\paragraph{\textbf{Evaluation Protocols}}
We evaluate both predictive performance and robustness to domain shifts under two cross-session protocols.

\emph{(1) Standard cross-session evaluation.}
Following prior works~\cite{matt,hu2025correlation}, we adopt task-specific cross-session splits.
For MI, session 1 is used for training, with one-eighth (12.5\%) of trials reserved for validation, and session 2 is used for testing.
For SSVEP and ERN, sessions 1--3 are used for training, session 4 for validation, and session 5 for testing.
This protocol is used in \cref{tab:result_main} for comparison with Euclidean and Riemannian baselines.

\emph{(2) Exhaustive cross-session DG evaluation.}
We further enumerate multiple train$\rightarrow$test session splits to evaluate robustness under session-wise shifts.
We denote by $S_T \rightarrow S_E$ a transfer setting where the model is trained on sessions $T$ and evaluated on held-out sessions $E$.
For MI, we report $S1 \rightarrow S2$ and $S2 \rightarrow S1$.
For SSVEP and ERN, we use leave-one-session-out transfers: $S1234 \rightarrow S5$, $S2345 \rightarrow S1$, $S1345 \rightarrow S2$, $S1245 \rightarrow S3$, and $S1235 \rightarrow S4$.
\paragraph{\textbf{Baselines}}
We consider a diverse set of baselines.
Specifically, we include widely used Euclidean deep learning models for EEG decoding, including
ShallowConvNet~\cite{matt18}, EEGNet~\cite{eegnet}, MBEEGSE~\cite{matt50}, and FBCNet~\cite{matt53}.
To highlight the benefit of correlation geometry, we further compare with representative Riemannian and geometric deep learning approaches, namely MAtt~\cite{matt} and GDLNet~\cite{gdlnet}.
For DG, we additionally compare with several classical instance-level alignment methods, including IW~\cite{Choi_2021_CVPR}, DAC-SC~\cite{lee2023decompose}, RIW~\cite{bi2024learning}, and DRIW~\cite{bi2025learning}.
We also report targeted comparisons with manifold-based domain adaptation methods, including SPDDSMBN~\cite{kobler2022spd}, Deep OT-SPD~\cite{ju2022deep}, and SPDIM~\cite{ICLR2025_fc6cd575}.

\paragraph{\textbf{Backbone and CorSW Configuration}.}
We adopt CorAtt~\cite{hu2025correlation} as the correlation-manifold backbone and apply CorSW to the correlation representations produced by its attention module, where session shifts are represented on $\cor{n}$.
We evaluate three geometric instantiations: CorAtt-OLM and CorAtt-LSM, which use OLM and LSM throughout the network, and CorAtt-MIX, which uses OLM in the attention module and LSM in the classification layer.
Accordingly, the CorSW geometry is matched to the representation space where the loss is imposed: OLM for CorAtt-OLM and CorAtt-MIX, and LSM for CorAtt-LSM.
The number of slicing directions is set to $S=400$, and the Wasserstein order is fixed to $p=2$.

\paragraph{\textbf{Implementation Details}}
For the BCIC-IV-2a dataset, the number of subparts, the size of the transformation matrix in CorAtt, the learning rate, and the batch size are set to $3$, $25\times25$, $5\times10^{-4}$, and $128$, respectively.
For the MAMEM-SSVEP-II dataset, these values are configured as $7$, $15\times15$, $5\times10^{-3}$, and $64$.
For the BCI-ERN dataset, we use $3$ subparts, a transformation matrix of size $14\times14$, a learning rate of $1\times10^{-3}$, and a batch size of $32$.
All models are trained using the Adam optimizer~\cite{kingma2017adammethodstochasticoptimization} with identical training schedules.

\subsection{Performance Comparison}

\begin{table*}[!t]
\centering
\captionsetup{font=small, labelfont=bf, skip=4pt}

\begin{minipage}[t]{0.315\textwidth}
\centering
\caption{Effect of reference construction.}
\label{tab:corsw_reference_ablation}
\footnotesize
\setlength{\tabcolsep}{3pt}
\begin{tabularx}{\linewidth}{@{}lYY@{}}
\toprule
\textbf{Reference} & \textbf{SSVEP} & \textbf{ERN} \\
\midrule
Gaussian (random)  & $\mathbf{68.58 \pm 1.01}$ & $\mathbf{81.33 \pm 1.14}$ \\
Gaussian (fixed)   & $68.07 \pm 1.45$ & $80.76 \pm 1.29$ \\
Gaussian (matched) & $68.24 \pm 1.08$ & $81.03 \pm 1.17$ \\
Source-mix         & $67.73 \pm 1.18$ & $80.47 \pm 1.21$ \\
\bottomrule
\end{tabularx}
\end{minipage}
\hfill
\begin{minipage}[t]{0.315\textwidth}
\centering
\caption{Effect of correlation embedding.}
\label{tab:corsw_embedding_ablation}
\footnotesize
\setlength{\tabcolsep}{3pt}
\begin{tabularx}{\linewidth}{@{}lYY@{}}
\toprule
\textbf{Embedding} & \textbf{SSVEP} & \textbf{ERN} \\
\midrule
Lower-tri. vec.  & $65.58 \pm 0.68$ & $74.63 \pm 1.72$ \\
Full-matrix vec. & $66.49 \pm 0.71$ & $75.15 \pm 1.21$ \\
$\offlog$        & $\mathbf{68.58 \pm 1.01}$ & $\mathbf{81.33 \pm 1.14}$ \\
$\lslog$         & $67.74 \pm 2.44$ & $78.63 \pm 3.31$ \\
\bottomrule
\end{tabularx}
\end{minipage}
\hfill
\begin{minipage}[t]{0.315\textwidth}
\centering
\caption{Effect of slicing directions.}
\label{tab:corsw_slicing_ablation}
\footnotesize
\setlength{\tabcolsep}{3pt}
\begin{tabularx}{\linewidth}{@{}lYY@{}}
\toprule
\textbf{Direction} & \textbf{SSVEP} & \textbf{ERN} \\
\midrule
Random    & $67.76 \pm 1.07$ & $80.47 \pm 1.18$ \\
Learnable & $68.31 \pm 1.04$ & $81.02 \pm 1.16$ \\
Uniform    & $\mathbf{68.58 \pm 1.01}$ & $\mathbf{81.33 \pm 1.14}$ \\
\bottomrule
\end{tabularx}
\end{minipage}

\end{table*}

\paragraph{\textbf{Comparison with Existing EEG Baselines}}
As shown in \cref{tab:result_main}, geometry-aware methods consistently outperform Euclidean baselines across the three EEG paradigms.
Euclidean models such as EEGNet~\cite{eegnet} and ShallowConvNet~\cite{matt18} operate in standard feature spaces and show clear performance gaps, especially on MI and SSVEP.
Riemannian models, including MAtt~\cite{matt} and GDLNet~\cite{gdlnet}, improve over these Euclidean baselines by exploiting second-order EEG geometry.
Among them, CorAtt~\cite{hu2025correlation} provides a strong correlation-manifold backbone.

Building on CorAtt, CorSW further improves performance across different geometric instantiations.
CorAtt-OLM+CorSW-OLM achieves the largest gain on ERN, CorAtt-LSM+CorSW-LSM performs best on SSVEP, and CorAtt-MIX+CorSW-OLM obtains the highest MI accuracy.
In addition to improving the mean score, CorSW often reduces run-to-run variability. For example, the standard deviation of CorAtt-OLM decreases from $3.22$ to $1.01$ on SSVEP and from $3.40$ to $1.14$ on ERN after adding CorSW.
These results indicate that CorSW provides complementary distribution-level regularization beyond the backbone architecture, improving training stability under session-wise shifts without changing the inference pipeline.

\paragraph{\textbf{Comparison with DG Strategies}}
\cref{tab:dg_results} evaluates robustness under the exhaustive cross-session DG protocol.
To ensure a controlled comparison, we fix CorAtt as the backbone and add different DG objectives, including instance-level alignment and whitening-based strategies.
The \textit{Mean} column averages performance over all 12 train$\rightarrow$test transfers.
Across OLM, LSM, and MIX backbones, CorSW achieves the highest mean score among the compared DG methods, improving CorAtt-OLM, CorAtt-LSM, and CorAtt-MIX by $+2.45$, $+2.17$, and $+2.04$, respectively.
The gains are especially pronounced on SSVEP and ERN, where the leave-one-session-out protocol exposes more diverse session-wise shifts.
Although CorSW is not uniformly better on every split, such as the SSVEP $S1235\rightarrow S4$ transfer, its improvements on the remaining transfers dominate the overall average.
On MI, the gains are smaller, which is expected because only two sessions are available and the evaluation contains fewer transfer directions.
Overall, these results show that aligning source-domain distributions on the correlation manifold is more effective than applying Euclidean alignment losses to correlation manifold-valued EEG representations.

\paragraph{\textbf{Comparison with Manifold-based EEG Adaptation}}
\cref{tab:manifold_eeg_baselines} further compares CorSW with manifold-based EEG domain adaptation (DA) baselines.
SPDDSMBN, Deep OT-SPD, and SPDIM are DA methods developed on the SPD manifold.
In contrast, CorSW performs distribution alignment directly on full-rank correlation matrices under correlation-manifold geometry.
Compared with these manifold-based adaptation methods, CorAtt+CorSW achieves the best performance on all three tasks, improving over the strongest SPDIM baseline by $+2.88$ on MI, $+4.47$ on SSVEP, and $+7.18$ on ERN.
It also improves the CorAtt-OLM backbone from $72.62$ to $73.40$ on MI, from $67.15$ to $69.78$ on SSVEP, and from $80.78$ to $83.30$ on ERN.
These results suggest that directly aligning correlation-valued distributions provides a more suitable adaptation signal for correlation-based EEG representations than applying SPD manifold DA methods to covariance-based representations.

\begin{figure}[t]
    \centering
    \includegraphics[width=1\linewidth]{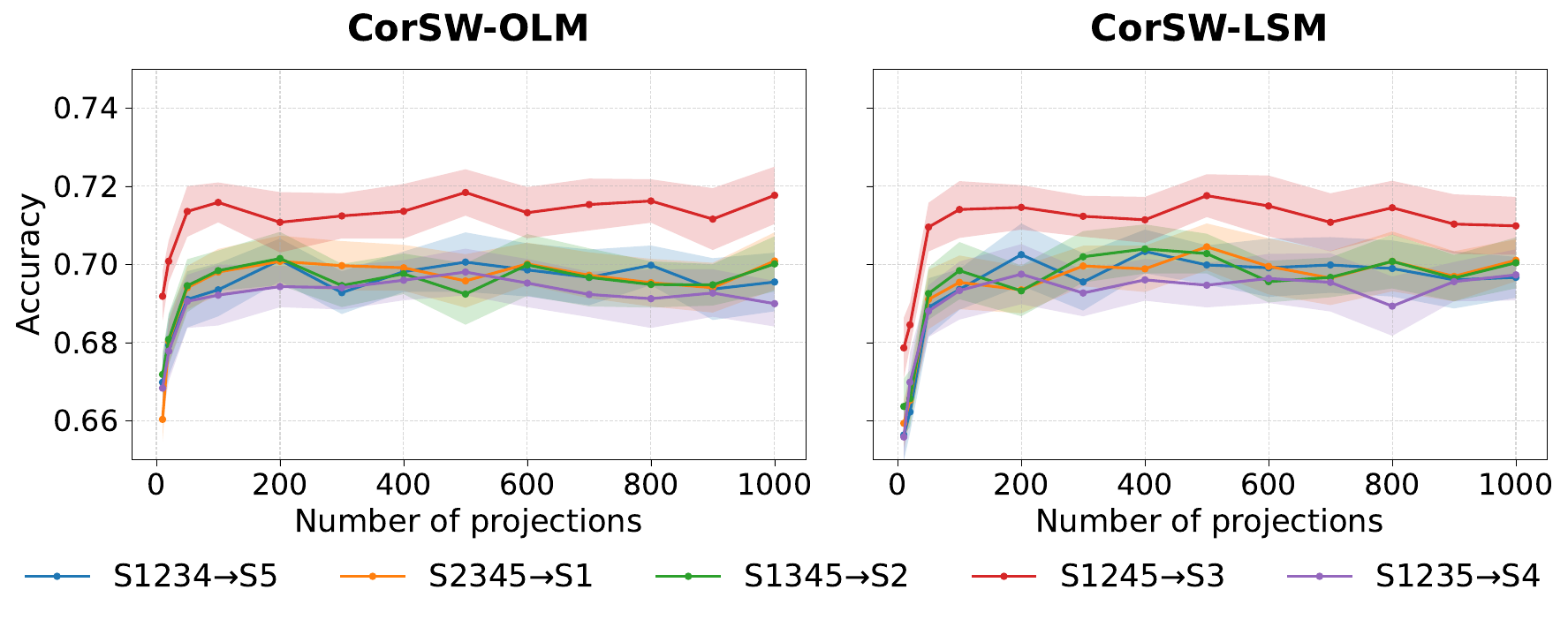}
    \caption{Performance of CorSW-OLM/LSM vs. slicing number $S$ on the MAMEM-SSVEP-II dataset.}
    \Description{Performance of CorSW-OLM/LSM vs. slicing number $S$ on the MAMEM-SSVEP-II dataset.}
    \label{fig:corsw_proj_num}
\end{figure}

\begin{figure*}[t]
    \centering
    \includegraphics[width=1\linewidth]{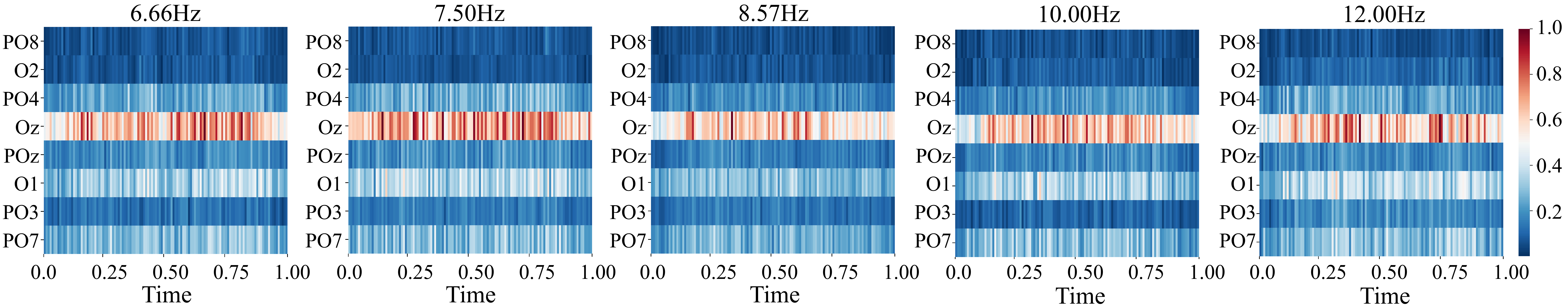}
    \caption{Heatmaps of CorAtt-OLM+CorSW-OLM for the S11 subject across five different frequencies on the MAMEM-SSVEP-II dataset. The x-axis and y-axis represent time and EEG channels, respectively.}
    \Description{TBD}
    \label{fig:mamem_heat}
\end{figure*}

\begin{figure*}[t]
    \centering
    \includegraphics[width=1\linewidth]{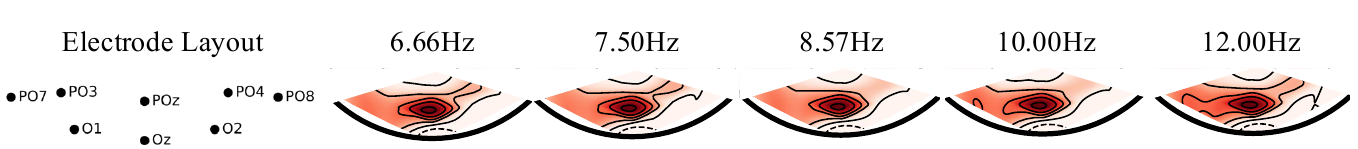}
    \caption{The diagram of electrode distribution (a) and the spatial topo-maps of CorAtt-OLM+CorSW-OLM for the S11 subject across five different frequencies on the MAMEM-SSVEP-II dataset. Strong gradient activations are marked in dark red.}
    \Description{TBD}
    \label{fig:mamem_topo}
\end{figure*}

\begin{figure*}[t]
    \centering
    \includegraphics[width=1\linewidth]{fig/bci_cha_heatmap_topomap.pdf}
    \caption{Visualization of heatmaps, topo-maps, and electrode distribution for subject S7 on the BCI-ERN dataset.}
    \Description{TBD}
    \label{fig:bcicha_topo_heat}
\end{figure*}

\subsection{Ablation Study}

\paragraph{\textbf{Reference Construction}.}
\cref{tab:corsw_reference_ablation} studies how the reference distribution $\mathcal{Q}^d$ is constructed. We compare four choices: Gaussian reference resampled at each iteration, fixed Gaussian sampled once before training, moment-matched Gaussian estimated from training features, and source-mix reference built from source-domain statistics. The resampled Gaussian performs best on SSVEP and ERN. Its consistent gains over the fixed and moment-matched Gaussians suggest that resampling provides mild stochastic regularization and prevents the model from overfitting to a single alignment target. The source-mix reference performs worst, indicating that directly reusing source statistics may introduce source-specific bias instead of providing a neutral target.

\paragraph{\textbf{Correlation Embedding}.}
\cref{tab:corsw_embedding_ablation} evaluates whether the geometry-aware embedding is necessary by replacing OLM/LSM mappings with flat SW baselines based on lower-triangular or full-matrix vectorization.
Both vectorization baselines substantially underperform the geometry-aware log embeddings, especially on ERN.
This confirms that CorSW benefits from slicing and Wasserstein matching in an embedding space that preserves the correlation-manifold geometry, rather than simply applying SW to matrix entries.

\paragraph{\textbf{Slicing Directions}.}
\cref{tab:corsw_slicing_ablation} compares different slicing direction strategies.
Uniform unit directions achieve the best performance on both tasks, while learnable directions add optimization complexity without further gains and non-uniform random directions are less stable.
Together with the sensitivity analysis in \cref{fig:corsw_proj_num}, these results support uniformly sampled directions as the default choice.

\paragraph{\textbf{Effect of the number of projections $S$.}}
We analyze the sensitivity of the number of slicing projections $S$ in \cref{fig:corsw_proj_num}.
CorSW-OLM and -LSM degrade when $S$ is small, indicating that too few directions lead to an inaccurate SW approximation on the correlation manifold.
As $S$ increases, performance improves rapidly and then saturates, suggesting that a moderate number of projections is sufficient for stable estimation.
This justifies using $S=400$ as the default setting, which offers a practical accuracy--efficiency trade-off.

\begin{table}[t]
\centering
\caption{Training time per iteration (ms/iter).}
\label{tab:train_time}
\scriptsize
\setlength{\tabcolsep}{2.2pt}
\renewcommand{\arraystretch}{0.99}
\resizebox{\columnwidth}{!}{%
\begin{tabular}{@{}lccccc@{}}
\toprule
Method & CorAtt & +IW & +DAC-SC & +DRIW & +CorSW \\
\midrule
Time & $27.50{\pm}0.66$ & $29.57{\pm}0.63$ & $32.10{\pm}0.88$ & $32.23{\pm}0.68$ & $32.73{\pm}0.97$ \\
\bottomrule
\end{tabular}%
}
\end{table}

\paragraph{\textbf{Training Efficiency}}
\cref{tab:train_time} reports the per-iteration training time under the same implementation setting. 
CorSW increases the training time from $27.50$ ms/iter to $32.73$ ms/iter, adding about $5.2$ ms per iteration. 
This overhead is comparable to DAC-SC and DRIW. 
The additional cost mainly comes from embedding correlation matrices, projecting them along sampled directions, and computing one-dimensional Wasserstein distances in the embedding space.
Since CorSW is used only as a training loss, it does not modify the backbone architecture or introduce any inference-time overhead.

\subsection{EEG Model Interpretation}

We further examine whether the learned correlation representations focus on neurophysiologically meaningful spatial--temporal patterns by visualizing gradient-based attribution maps of the trained CorAtt-OLM+CorSW-OLM model.
For the MAMEM task, as shown in \cref{fig:mamem_topo,fig:mamem_heat}, CorAtt-OLM+CorSW exhibits pronounced responses around the Oz electrode across all stimulus frequencies, particularly within the 0.25 to 0.75-second window corresponding to canonical SSVEP activity.
This spatial--temporal pattern is highly consistent with established neuroscience findings that SSVEP responses are dominated by the primary visual cortex~\cite{mamem-a1,mamem-a2}, indicating that the learned correlation representations capture meaningful visual processing dynamics rather than session-specific artifacts.

As shown in \cref{fig:bcicha_topo_heat}, for the BCI-ERN dataset, gradient responses distinguishing \emph{correct} and \emph{error} trials are primarily concentrated around the FCz electrode, with peak activations occurring between 0.1 and 0.4 seconds.
This observation aligns well with prior neurophysiological evidence that ERN originates from the anterior cingulate cortex and medial frontal regions involved in performance monitoring~\cite{bci-al}, suggesting that the model focuses on task-relevant neural signatures rather than session-specific artifacts.

\section{Conclusion}

In this paper, we studied Sliced-Wasserstein discrepancies on the manifold of full-rank correlation matrices by instantiating pullback slicing under two intrinsic correlation geometries, OLM and LSM.
The resulting CorSW-OLM and CorSW-LSM reduce geodesic slicing to linear slicing in geometry-specific embedding spaces, enabling closed-form slicing coordinates and efficient one-dimensional Wasserstein computations.
We further used CorSW as a source-only DG regularizer for EEG decoding by aligning source-domain correlation distributions to a common reference law.
Experiments on MI, SSVEP, and ERN benchmarks show that CorSW consistently improves correlation-manifold-based EEG decoding with low training overhead and no additional inference-time cost.

\section*{Limitations and Ethical Considerations}
A main limitation of this work is that the used PEMSW framework is restricted to manifolds endowed with PEMs. 
Although this setting covers some matrix manifolds, its direct extension to general manifolds remains unexplored. 
Additionally, this study uses existing EEG benchmarks and does not collect new human-subject data or introduce new clinical interventions. 
Nevertheless, EEG signals are sensitive biomedical data and may contain information related to cognitive states, health conditions, or individual identity. 
Any practical use of CorSW-based models should therefore follow strict data governance, including informed consent, secure storage, controlled access, and privacy protection. 
For applications such as brain-computer interfaces or clinical decision support, model predictions should not be treated as standalone medical decisions. 

\section*{GenAI Disclosure}
Generative AI tools were used to assist with language polishing and LaTeX formatting during manuscript preparation.
All technical content, mathematical formulations, experimental design, and results were conceived, implemented, and verified by the authors.

\begin{acks}
This work was supported in part by the Zhejiang Leading Innovative and Entrepreneur Team Introduction Program (2024R01007), and the “Pioneer” and “Leading Goose” Research and Development Program of Zhejiang (2025C02077), the National Natural Science Foundation of China (Grant Nos. 62306127 and 62332008), the National Key R\&D Program of China (Grant Nos. 2023YFF1105102 and 2023YFF1105105), the Natural Science Foundation of Jiangsu Province (BK20231040), the Fundamental Research Funds for the Central Universities (JUSRP124015), the Postgraduate Research \& Practice Innovation Program of Jiangsu Province (SJCX25\_1319). 
\end{acks}

\bibliographystyle{ACM-Reference-Format}
\balance
\bibliography{ref}

@article{david2019riemannian,
  title={A Riemannian Structure for {C}orrelation Matrices},
  author={David, Paul and Gu, Weiqing},
  journal={Operators and Matrices},
  volume={13},
  number={3},
  pages={607--627},
  year={2019},
}

@article{manifoldnet,
  title={{ManifoldNet}: A Deep Neural Network for Manifold-Valued Data With Applications},
  author={Chakraborty, Rudrasis and Bouza, Jose and Manton, Jonathan and Vemuri, Baba C},
  journal={IEEE Transactions on Pattern Analysis and Machine Intelligence},
  volume={44},
  number={2},
  pages={799--810},
  year={2022},
}

@article{chen2024adaptive,
  title={Adaptive {Log-Euclidean} Metrics for {SPD} Matrix Learning},
  author={Chen, Ziheng and Song, Yue and Xu, Tianyang and Huang, Zhiwu and Wu, Xiao-Jun and Sebe, Nicu},
  journal={IEEE Transactions on Image Processing},
  volume={33},
  pages={5194--5205},
  year={2024},
}

@inproceedings{chen2024rmlr,
  title={{RMLR}: Extending Multinomial Logistic Regression into General Geometries},
  author={Chen, Ziheng and Song, Yue and Wang, Rui and Wu, Xiaojun and Sebe, Nicu},
  booktitle={Advances in Neural Information Processing Systems},
  year={2024}
}

@inproceedings{ju2024deep,
  title={Deep Geodesic Canonical Correlation Analysis for Covariance-Based Neuroimaging Data},
  author={Ce Ju and Reinmar J Kobler and Liyao Tang and Cuntai Guan and Motoaki Kawanabe},
  booktitle={International Conference on Learning Representations},
  numpages={9},
  publisher={OpenReview.net},
  address={Vienna, Austria},
  year={2024},
}

@inproceedings{chen2024rmlrspd,
  title={{Riemannian} Multinomial Logistics Regression for {SPD} Neural Networks},
  author={Chen, Ziheng and Song, Yue and Liu, Gaowen and Kompella, Ramana Rao and Wu, Xiao-Jun and Sebe, Nicu},
  booktitle={Proceedings of the IEEE/CVF Conference on Computer Vision and Pattern Recognition},
  year={2024},
  pages={17086--17096}
}

@inproceedings{chen2023riemannian,
  title={Riemannian Local Mechanism for SPD Neural Networks},
  author={Chen, Ziheng and Xu, Tianyang and Wu, Xiao-Jun and Wang, Rui and Huang, Zhiwu and Kittler, Josef},
  booktitle={Proceedings of the AAAI Conference on Artificial Intelligence},
  year={2023}
}

@inproceedings{chen2024liebn,
  title={A Lie Group Approach to Riemannian Batch Normalization},
  author={Chen, Ziheng and Song, Yue and Liu, Yunmei and Sebe, Nicu},
  booktitle={International Conference on Learning Representations},
  publisher={OpenReview.net},
  address={Vienna, Austria},
  year={2024}
}

@article{eegnet,
  title={{EEGNet}: a compact convolutional neural network for {EEG}-based brain--computer interfaces},
  author={Lawhern, Vernon J and Solon, Amelia J and Waytowich, Nicholas R and Gordon, Stephen M and Hung, Chou P and Lance, Brent J},
  journal={Journal of Neural Engineering},
  volume={15},
  number={5},
  pages={056013},
  year={2018},
}

@article{matt18,
  title={Deep learning with convolutional neural networks for {EEG} decoding and visualization},
  author={Schirrmeister, Robin Tibor and Springenberg, Jost Tobias and Fiederer, Lukas Dominique Josef and Glasstetter, Martin and Eggensperger, Katharina and Tangermann, Michael and Hutter, Frank and Burgard, Wolfram and Ball, Tonio},
  journal={Human Brain Mapping},
  volume={38},
  number={11},
  pages={5391--5420},
  year={2017}
}

@article{matt53,
  title={{FBCNet}: A multi-view convolutional neural network for brain-computer interface},
  author={Mane, Ravikiran and Chew, Effie and Chua, Karen and Ang, Kai Keng and Robinson, Neethu and Vinod, A Prasad and Lee, Seong-Whan and Guan, Cuntai},
  journal={arXiv preprint},
  volume={arXiv:2104.01233},
  numpages={16},
  year={2021}
}

@article{matt50,
  title={A multi-branch convolutional neural network with squeeze-and-excitation attention blocks for {EEG}-based motor imagery signals classification},
  author={Altuwaijri, Ghadir Ali and Muhammad, Ghulam and Altaheri, Hamdi and Alsulaiman, Mansour},
  journal={Diagnostics},
  volume={12},
  number={4},
  pages={995},
  year={2022},
}

@inproceedings{matt,
  title={MAtt: A Manifold Attention Network for {EEG} Decoding},
  author={Pan, Yue-Ting and Chou, Jing-Lun and Wei, Chun-Shu},
  booktitle={Advances in Neural Information Processing Systems},
  volume={35},
  pages={31116--31129},
  year={2022},
}

@inproceedings{grnet,
  title={Building deep networks on {G}rassmann manifolds},
  author={Huang, Zhiwu and Wu, Jiqing and Van Gool, Luc},
  booktitle={Proceedings of the {AAAI} Conference on Artificial Intelligence},
  volume={32},
  pages={3279--3286},
  year={2018},
  publisher={AAAI Press},
  address={New Orleans, LA, USA}
}

@inproceedings{spdnetbn,
  title={{Riemannian} batch normalization for {SPD} neural networks},
  author={Brooks, Daniel and Schwander, Olivier and Barbaresco, Fr{\'e}d{\'e}ric and Schneider, Jean-Yves and Cord, Matthieu},
  booktitle={Advances in Neural Information Processing Systems},
  volume={32},
  pages={8890--8901},
  year={2019},
  publisher={Curran Associates, Inc.},
  address={Vancouver, Canada}
}

@inproceedings{spdnet,
  title={A {R}iemannian network for {SPD} matrix learning},
  author={Huang, Zhiwu and Van Gool, Luc},
  booktitle={Proceedings of the Thirty-First AAAI Conference on Artificial Intelligence},
  volume={31},
  pages={2036--2042},
  year={2017},
}

@inproceedings{eeg-tcnet,
  title={{EEG-TCNet}: An accurate temporal convolutional network for embedded motor-imagery brain--machine interfaces},
  author={Ingolfsson, Thorir Mar and Hersche, Michael and Wang, Xiaying and Kobayashi, Nobuaki and Cavigelli, Lukas and Benini, Luca},
  booktitle={IEEE International Conference on Systems, Man, and Cybernetics},
  pages={2958--2965},
  year={2020},
}

@inproceedings{gdlnet,
  title={A {Grassmannian} Manifold Self-Attention Network for Signal Classification},
  author={Wang, Rui and Hu, Chen and Chen, Ziheng and Wu, Xiao-Jun and Song, Xiaoning},
  booktitle={Proceedings of the International Joint Conference on Artificial Intelligence},
  pages={5099--5107},
  year={2024},
}

@inproceedings{wang2025learning,
  title={Learning to Normalize on the {SPD} Manifold under {B}ures-{W}asserstein Geometry},
  author={Wang, Rui and Jin, Shaocheng and Chen, Ziheng and Luo, Xiaoqing and Wu, Xiao-Jun},
  booktitle={Proceedings of the IEEE/CVF Conference on Computer Vision and Pattern Recognition},
  pages={8289--8298},
  year={2025},
}

@article{mamem-a1,
  title={Human {EEG} Responses to 1--100 {Hz} Flicker: Resonance Phenomena in Visual Cortex and Their Potential Correlation to Cognitive Phenomena},
  author={Herrmann, Christoph S},
  journal={Experimental Brain Research},
  volume={137},
  number={3--4},
  pages={346--352},
  year={2001},
}

@article{mamem-a2,
  title={Highly Interactive Brain-Computer Interface Based on Flicker-Free Steady-State Motion Visual Evoked Potential},
  author={Han, Chengcheng and Xu, Guanghua and Xie, Jun and Chen, Chaoyang and Zhang, Sicong},
  journal={Scientific Reports},
  volume={8},
  number={1},
  pages={5835},
  year={2018},
}

@article{jalili2011constructing,
  title={Constructing Brain Functional Networks From {EEG}: Partial and Unpartial Correlations},
  author={Jalili, Mahdi and Knyazeva, Maria G},
  journal={Journal of Integrative Neuroscience},
  volume={10},
  number={2},
  pages={213--232},
  year={2011},
}

@article{bci-al,
  title={What We’ve Learned From Mistakes: Insights From Error-Related Brain Activity},
  author={Hajcak, Greg},
  journal={Current Directions in Psychological Science},
  volume={21},
  number={2},
  pages={101--106},
  year={2012},
}

@article{epskamp2018tutorial,
  title={A Tutorial on Regularized Partial Correlation Networks},
  author={Epskamp, Sacha and Fried, Eiko I},
  journal={Psychological Methods},
  volume={23},
  number={4},
  pages={617--634},
  year={2018},
}

@article{bciern,
  title={Objective and Subjective Evaluation of Online Error Correction During P300-Based Spelling},
  author={Margaux, Perrin and Maby, Emmanuel and Daligault, S{\'e}bastien and Bertrand, Olivier and Mattout, J{\'e}r{\'e}mie},
  journal={Advances in Human-Computer Interaction},
  volume={2012},
  pages={1--13},
  year={2012},
}

@misc{mamem,
  title={MAMEM {EEG} {SSVEP} Dataset {II} (256 Channels, 11 Subjects, 5 Frequencies Presented Simultaneously)},
  author={Nikolopoulos, Spiros},
  year={2016},
  howpublished={Dataset},
  url={https://api.semanticscholar.org/CorpusID:63262529}
}

@inproceedings{kobler2022spd,
  title={SPD domain-specific batch normalization to crack interpretable unsupervised domain adaptation in {EEG}},
  author={Kobler, Reinmar and Hirayama, Jun-ichiro and Zhao, Qibin and Kawanabe, Motoaki},
  booktitle={Advances in Neural Information Processing Systems},
  volume={35},
  pages={6219--6235},
  year={2022},
}

@inproceedings{chen2025understanding,
  title={Understanding Matrix Function Normalizations in Covariance Pooling Through the Lens of {Riemannian} Geometry},
  author={Chen, Ziheng and Song, Yue and Wu, Xiao-Jun and Liu, Gaowen and Sebe, Nicu},
  booktitle={International Conference on Learning Representations},
  year={2025},
}

@article{ju2023graph,
  title={Graph Neural Networks on SPD Manifolds for Motor Imagery Classification: A Perspective From the Time-Frequency Analysis},
  author={Ju, Ce and Guan, Cuntai},
  journal={IEEE Transactions on Neural Networks and Learning Systems},
  volume={35},
  number={12},
  pages={17701--17715},
  year={2023},
}

@article{yair2019domain,
  title={Domain Adaptation with Optimal Transport on the Manifold of {SPD} Matrices},
  author={Yair, Or and Dietrich, Felix and Talmon, Ronen and Kevrekidis, Ioannis G},
  journal={arXiv preprint},
  volume={arXiv:1906.00616},
  numpages={12},
  year={2019},
  publisher={arXiv},
}

@article{thanwerdas2024permutation,
  title={Permutation-Invariant Log-Euclidean Geometries on Full-Rank Correlation Matrices},
  author={Thanwerdas, Yann},
  journal={SIAM Journal on Matrix Analysis and Applications},
  volume={45},
  number={2},
  pages={930--952},
  year={2024},
}

@article{brunner2008bci,
  title={BCI Competition 2008–Graz Data Set A},
  author={Brunner, Clemens and Leeb, Robert and Müller-Putz, Gernot and Schlögl, Alois and Pfurtscheller, Gert},
  journal={Institute for Knowledge Discovery (Laboratory of Brain-Computer Interfaces), Graz University of Technology},
  volume={},
  number={},
  pages={},
  year={2008}
}

@book{loring2011introduction,
  title={An Introduction to Manifolds},
  author={Tu, Loring W.},
  publisher={Springer},
  address={New York, NY, USA},
  year={2011},
  isbn={9781441973998}
}

@inproceedings{chen2025gyrogroup,
  title={Gyrogroup Batch Normalization},
  author={Chen, Ziheng and Song, Yue and Wu, Xiao-Jun and Sebe, Nicu},
  booktitle={International Conference on Learning Representations},
  year={2025},
}

@book{do1992riemannian,
  title={Riemannian Geometry},
  author={Do Carmo, Manfredo Perdig{\~a}o and Flaherty, Francis J.},
  publisher={Springer},
  address={New York, NY, USA},
  year={1992},
  isbn={9780387978780}
}

@article{archakov2021new,
  title={A New Parametrization of Correlation Matrices},
  author={Archakov, Ilya and Hansen, Peter Reinhard},
  journal={Econometrica},
  volume={89},
  number={4},
  pages={1699--1715},
  year={2021},
  doi={10.3982/ECTA16910}
}

@book{villani2009optimal,
  title={Optimal Transport: Old and New},
  author={Villani, C{\'e}dric},
  publisher={Springer},
  series={Grundlehren der Mathematischen Wissenschaften},
  volume={338},
  address={Berlin, Heidelberg, Germany},
  year={2009},
}

@inproceedings{paty2019subspace,
  title={Subspace Robust Wasserstein Distances},
  author={Paty, Fran{\c{c}}ois-Pierre and Cuturi, Marco},
  booktitle={Proceedings of the 36th International Conference on Machine Learning},
  volume={97},
  pages={5072--5081},
  year={2019},
  publisher={PMLR},
  doi={10.5555/3327144.3327253}
}

@article{peyre2019computational,
  title={Computational Optimal Transport: With Applications to Data Science},
  author={Peyr{\'e}, Gabriel and Cuturi, Marco},
  journal={Foundations and Trends in Machine Learning},
  volume={11},
  number={5-6},
  pages={355--607},
  year={2019}
}

@inproceedings{rabin2011wasserstein,
  title={Wasserstein Barycenter and Its Application to Texture Mixing},
  author={Rabin, Julien and Peyr{\'e}, Gabriel and Delon, Julie and Bernot, Marc},
  booktitle={International Conference on Scale Space and Variational Methods in Computer Vision},
  pages={435--446},
  year={2011},
  publisher={Springer},
  address={Berlin, Heidelberg, Germany},
  doi={10.1007/978-3-642-24785-9_37}
}

@article{bonneel2015sliced,
  title={Sliced and Radon Wasserstein Barycenters of Measures},
  author={Bonneel, Nicolas and Rabin, Julien and Peyr{\'e}, Gabriel and Pfister, Hanspeter},
  journal={Journal of Mathematical Imaging and Vision},
  volume={51},
  number={1},
  pages={22--45},
  year={2015},
  doi={10.1007/s10851-014-0506-3}
}

@inproceedings{pele2009fast,
  title={Fast and Robust Earth Mover’s Distances},
  author={Pele, Ofir and Werman, Michael},
  booktitle={IEEE 12th International Conference on Computer Vision},
  pages={460--467},
  year={2009},
  organization={IEEE},
}

@inproceedings{nadjahi2019asymptotic,
  title={Asymptotic Guarantees for Learning Generative Models With the Sliced-Wasserstein Distance},
  author={Nadjahi, Kimia and Durmus, Alain and {\c{S}}im{\c{s}}ekli, Umut and Badeau, Roland},
  booktitle={Advances in Neural Information Processing Systems},
  pages={1--12},
  year={2019},
}

@inproceedings{nadjahi2020statistical,
  title={Statistical and Topological Properties of Sliced Probability Divergences},
  author={Nadjahi, Kimia and Durmus, Alain and Chizat, L{\'e}na{\"\i}c and Kolouri, Soheil and Shahrampour, Shahin and Simsekli, Umut},
  booktitle={Advances in Neural Information Processing Systems 33},
  pages={20802--20812},
  year={2020},
}

@inproceedings{kolouri2016sliced,
  title={Sliced Wasserstein Kernels for Probability Distributions},
  author={Kolouri, Soheil and Zou, Yang and Rohde, Gustavo K},
  booktitle={Proceedings of the IEEE Conference on Computer Vision and Pattern Recognition},
  pages={5258--5267},
  year={2016},
  organization={IEEE},
}

@article{Ahmad2022SeizureDetection,
  title={EEG-Based Epileptic Seizure Detection via Machine/Deep Learning Approaches: A Systematic Review},
  author={Ijaz Ahmad and Xin Wang and Mingxing Zhu and Cheng Wang and Yao Pi and Javed Ali Khan and Siyab Khan and Oluwarotimi Williams Samuel and Shixiong Chen and Guanglin Li},
  journal={Computational Intelligence and Neuroscience},
  volume={2022},
  number={},
  pages={6486570},
  year={2022},
}

@article{Cherian2022SeizureDetection,
  title={Theoretical and Methodological Analysis of EEG Based Seizure Detection and Prediction: An Exhaustive Review},
  author={Resmi Cherian and E. Gracemary Kanaga},
  journal={Journal of Neuroscience Methods},
  volume={369},
  number={},
  pages={109483},
  year={2022},
}

@article{Aboalayon2016SleepClassification,
  title={Sleep Stage Classification Using EEG Signal Analysis: A Comprehensive Survey and New Investigation},
  author={Khald Ali I. Aboalayon and Miad Faezipour and Wafaa S. Almuhammadi and Saeid Moslehpour},
  journal={Entropy},
  volume={18},
  number={9},
  pages={272},
  year={2016},
}

@article{Phan2022SleepStaging,
  author  = {Phan, H. and Mikkelsen, K.},
  title   = {Automatic sleep staging of EEG signals: Recent development, challenges, and future directions},
  journal = {Physiological Measurement},
  year    = {2022},
  volume  = {43},
  number  = {4},
  pages   = {04TR01},
}

@article{Altaheri2023MotorImagery,
  author  = {Altaheri, H. and Muhammad, G. and Alsulaiman, M. and Amin, S. U. and Altuwaijri, G. A. and Abdul, W. and Bencherif, M. A. and Faisal, M.},
  title   = {Deep learning techniques for classification of electroencephalogram (EEG) motor imagery (MI) signals: A review},
  journal = {Neural Computing and Applications},
  year    = {2023},
  volume  = {35},
  number  = {20},
  pages   = {14681--14722},
}

@inproceedings{Roy2019ChronoNet,
  title={ChronoNet: A deep recurrent neural network for abnormal EEG identification},
  author={Roy, Subhrajit and Kiral-Kornek, Isabell and Harrer, Stefan},
  booktitle={Conference on Artificial Intelligence in Medicine},
  volume={11526},
  pages={47--56},
  year={2019},
}

@article{Suhaimi2020EmotionRecognition,
  author  = {Nazmi Sofian Suhaimi and James Mountstephens and Jason Teo and others},
  title   = {EEG-based emotion recognition: A state-of-the-art review of current trends and opportunities},
  journal = {Computational Intelligence and Neuroscience},
  year    = {2020},
  volume  = {2020},
  pages   = {9814248},
}

@article{Biesmans2016AuditoryAttention,
  author  = {Wouter Biesmans and Neetha Das and Tom Francart and Alexander Bertrand},
  title   = {Auditory-inspired speech envelope extraction methods for improved EEG-based auditory attention detection in a cocktail party scenario},
  journal = {IEEE Transactions on Neural Systems and Rehabilitation Engineering},
  year    = {2016},
  volume  = {25},
  number  = {5},
  pages   = {402--412},
}

@article{Dadebayev2022EmotionRecognition,
  author  = {Dadebayev, D. and Goh, W. W. and Tan, E. X.},
  title   = {EEG-based emotion recognition: Review of commercial EEG devices and machine learning techniques},
  journal = {Journal of King Saud University-Computer and Information Sciences},
  year    = {2022},
  volume  = {34},
  number  = {7},
  pages   = {4385--4401},
}

@inproceedings{Hine2017RestingStateEEG,
  title={Resting-state EEG: A study on its non-stationarity for biometric applications},
  author={Hine, Gabriel Emile and Maiorana, Emanuele and Campisi, Patrizio},
  booktitle={Proceedings of the 2017 International Conference of the Biometrics Special Interest Group (BIOSIG)},
  pages={1--5},
  year={2017},
  publisher={IEEE},
}

@article{blankertz2007optimizing,
  title={Optimizing spatial filters for robust EEG single-trial analysis},
  author={Blankertz, Benjamin and Tomioka, Ryota and Lemm, Steven and Kawanabe, Motoaki and Muller, Klaus-Robert},
  journal={IEEE Signal processing magazine},
  volume={25},
  number={1},
  pages={41--56},
  year={2007},
  publisher={IEEE}
}

@book{bhatia2009positive,
  title={Positive Definite Matrices},
  author={Bhatia, Rajendra},
  year={2009},
  publisher={Princeton University Press},
  address={Princeton, NJ, USA}
}

@article{mccann2001polar,
  title={Polar factorization of maps on Riemannian manifolds},
  author={McCann, Robert J},
  journal={Geometric \& Functional Analysis GAFA},
  volume={11},
  number={3},
  pages={589--608},
  year={2001},
  publisher={Springer}
}

@article{cui2019spherical,
  title={Spherical optimal transportation},
  author={Cui, Li and Qi, Xin and Wen, Chengfeng and Lei, Na and Li, Xinyuan and Zhang, Min and Gu, Xianfeng},
  journal={Computer-Aided Design},
  volume={115},
  pages={181--193},
  year={2019},
  publisher={Elsevier}
}

@article{ju2022deep,
  title={Deep optimal transport for domain adaptation on SPD manifolds},
  author={Ju, Ce and Guan, Cuntai},
  journal={Artificial Intelligence},
  volume={345},
  year={2025},
  pages={104347},
  publisher={Elsevier},
  address={Amsterdam, The Netherlands},
}

@inproceedings{cuturi2013sinkhorn,
  title={Sinkhorn distances: Lightspeed computation of optimal transport},
  author={Cuturi, Marco},
  booktitle={Advances in Neural Information Processing Systems},
  volume={26},
  pages={2292--2300},
  year={2013},
  publisher={Curran Associates, Inc.},
  address={Lake Tahoe, Nevada, USA}
}

@inproceedings{fatras2019learning,
  title={Learning with minibatch Wasserstein: asymptotic and gradient properties},
  author={Fatras, Kilian and Zine, Younes and Flamary, R{\'e}mi and Gribonval, R{\'e}mi and Courty, Nicolas},
  booktitle={Proceedings of the Twenty Third International Conference on Artificial Intelligence and Statistics},
  volume={108},
  pages={2131--2141},
  year={2020},
  publisher={PMLR},
  address={Online}
}

@phdthesis{bonnotte2013unidimensional,
  title={Unidimensional and evolution methods for optimal transportation},
  author={Bonnotte, Nicolas},
  year={2013},
  school={Paris 11}
}

@article{bayraktar2021strong,
  title={Strong equivalence between metrics of Wasserstein type},
  author={Bayraktar, Erhan and Guo, Gaoyue},
  journal={Electronic Communications in Probability},
  volume={26},
  pages={1--13},
  year={2021},
  publisher={Institute of Mathematical Statistics and Bernoulli Society}
}

@inproceedings{bonet2022spherical,
  title={Spherical Sliced-Wasserstein},
  author={Bonet, Clément and Berg, Paul and Courty, Nicolas and Septier, François and Drumetz, Lucas and Pham, Minh-Tan},
  booktitle={International Conference on Learning Representations},
  numpages={9},
  year={2023},
  publisher={OpenReview.net},
  address={Kigali, Rwanda}
}

@inproceedings{bonet2022hyperbolic,
  title={Hyperbolic Sliced-Wasserstein via Geodesic and Horospherical Projections},
  author={Bonet, Cl{\'e}ment and Chapel, Laetitia and Drumetz, Lucas and Courty, Nicolas},
  booktitle={Proceedings of the 2nd Annual Workshop on Topology, Algebra, and Geometry in Machine Learning at the 40th International Conference on Machine Learning},
  numpages={12},
  year={2023},
  publisher={PMLR},
  address={Honolulu, Hawaii, USA}
}

@inproceedings{bonet2023sliced,
  title={Sliced-Wasserstein on symmetric positive definite matrices for M/EEG signals},
  author={Bonet, Cl{\'e}ment and Mal{\'e}zieux, Beno{\i}t and Rakotomamonjy, Alain and Drumetz, Lucas and Moreau, Thomas and Kowalski, Matthieu and Courty, Nicolas},
  booktitle={International Conference on Machine Learning},
  pages={2777--2805},
  year={2023},
  publisher={PMLR},
  address={Honolulu, Hawaii, USA}
}

@inproceedings{hu2025correlation,
  title={A correlation manifold self-attention network for EEG decoding},
  author={Hu, Chen and Wang, Rui and Song, Xiaoning and Zhou, Tao and Wu, Xiao-Jun and Sebe, Nicu and Chen, Ziheng},
  booktitle={Proceedings of the Thirty-Fourth International Joint Conference on Artificial Intelligence},
  pages={5372--5380},
  year={2025},
}

@techreport{rebonato2011most,
  title={The most general methodology to create a valid correlation matrix for risk management and option pricing purposes},
  author={Rebonato, Riccardo and J{\"a}ckel, Peter},
  institution={Available at SSRN 1969689},
  numpages={12},
  year={2011}
}

@InProceedings{Choi_2021_CVPR,
    author    = {Choi, Sungha and Jung, Sanghun and Yun, Huiwon and Kim, Joanne T. and Kim, Seungryong and Choo, Jaegul},
    title     = {RobustNet: Improving Domain Generalization in Urban-Scene Segmentation via Instance Selective Whitening},
    booktitle = {Proceedings of the IEEE/CVF Conference on Computer Vision and Pattern Recognition},
    month     = {June},
    year      = {2021},
    pages     = {11580-11590}
}

@book{bridson2013metric,
  title={Metric spaces of non-positive curvature},
  author={Bridson, Martin R and Haefliger, Andr{\'e}},
  volume={319},
  year={2013},
  publisher={Springer Science \& Business Media}
}

@inproceedings{lee2023decompose,
  title={Decompose, adjust, compose: Effective normalization by playing with frequency for domain generalization},
  author={Lee, Sangrok and Bae, Jongseong and Kim, Ha Young},
  booktitle={Proceedings of the IEEE/CVF conference on computer vision and pattern recognition},
  pages={11776--11785},
  year={2023}
}

@article{bi2025learning,
  author={Bi, Qi and Yi, Jingjun and Zheng, Hao and Ji, Wei and Huang, Yawen and Li, Yuexiang and Zheng, Yefeng},
  journal={IEEE Transactions on Pattern Analysis and Machine Intelligence}, 
  title={Learning Generalized Medical Image Representation by Decoupled Feature Queries}, 
  year={2025},
  volume={47},
  number={12},
  pages={11252-11269}
}

@inproceedings{bi2024learning,
  title={Learning Generalized Medical Image Segmentation from Decoupled Feature Queries},
  author={Bi, Qi and Yi, Jingjun and Zheng, Hao and Ji, Wei and Huang, Yawen and Li, Yuexiang and Zheng, Yefeng},
  booktitle={Proceedings of the AAAI Conference on Artificial Intelligence},
  year={2024},
  volume={38},
  pages={810--818}
}

@inproceedings{ICLR2025_fc6cd575,
 author = {Li, Shanglin and Kawanabe, Motoaki and Kobler, Reinmar},
 booktitle = {International Conference on Representation Learning},
 title = {SPDIM: Source-Free Unsupervised Conditional and Label Shift Adaptation in EEG},
 volume = {2025},
 year = {2025}
}

@article{ju2025spd,
  title={SPD Learning for Covariance-Based Neuroimaging Analysis: Perspectives, Methods, and Challenges},
  author={Ju, Ce and Kobler, Reinmar J and Collas, Antoine and Kawanabe, Motoaki and Guan, Cuntai and Thirion, Bertrand},
  journal={arXiv:2504.18882},
  year={2025}
}

@article{congedo2017riemannian,
  title={Riemannian geometry for EEG-based brain-computer interfaces; a primer and a review},
  author={Congedo, Marco and Barachant, Alexandre and Bhatia, Rajendra},
  journal={Brain-Computer Interfaces},
  volume={4},
  number={3},
  pages={155--174},
  year={2017},
}

@misc{kingma2017adammethodstochasticoptimization,
      title={Adam: A Method for Stochastic Optimization}, 
      author={Diederik P. Kingma and Jimmy Ba},
      year={2017},
      eprint={Preprint},
      archivePrefix={arXiv},
      primaryClass={cs.LG},
      url={https://arxiv.org/abs/1412.6980}, 
}

@article{bonet2025sliced,
  title={Sliced-Wasserstein Distances and Flows on Cartan-Hadamard Manifolds},
  author={Bonet, Cl{\'e}ment and Drumetz, Lucas and Courty, Nicolas},
  journal={Journal of Machine Learning Research},
  volume={26},
  number={32},
  pages={1--76},
  year={2025}
}

@inproceedings{
li2026heegnet,
title={{HEEGN}et: Hyperbolic Embeddings for {EEG}},
author={Shanglin Li and Chu Shiwen and Okan Ko{\c{c}} and Yi Ding and Qibin Zhao and Motoaki Kawanabe and Ziheng Chen},
booktitle={The Fourteenth International Conference on Learning Representations},
year={2026},
url={https://openreview.net/forum?id=CNDNRjpVIL}
}

@String{Computing = "Computing" }

@String{Computer = "{IEEE} Computer" }

@String{Springer = "Springer-Verlag" }

@ArtifactSoftware{R,
    title = {R: A Language and Environment for Statistical Computing},
    author = {{R Core Team}},
    organization = {R Foundation for Statistical Computing},
    address = {Vienna, Austria},
    year = {2019},
    url = {https://www.R-project.org/},
}

\newpage
\appendix

\section{Related Work}

\paragraph{\textbf{Covariance/Correlation Matrix-Based EEG Decoding}}
Methods that operate on covariance or correlation matrices have demonstrated strong EEG decoding performance by explicitly modeling the underlying geometry of symmetric positive definite (SPD) manifolds, thereby improving robustness to inter-subject variability and session-wise distribution shifts~\citep{congedo2017riemannian,ju2025spd}.
MAtt~\citep{matt} constructs attention mechanisms directly on SPD manifolds, enabling the network to capture spatiotemporal dependencies between EEG channels through geometry-preserving operations, and consistently outperforming conventional Euclidean deep learning baselines.
SPDDSMBN~\citep{kobler2022spd} focuses on unsupervised domain adaptation by learning domain-invariant mappings in the tangent space of the SPD manifold, where an interpretable normalization scheme is introduced to reduce domain-specific covariance shifts.
DGCCA~\citep{ju2024deep} further extends correlation-based modeling by introducing geodesic correlation learning in an SPD latent space, explicitly aligning paired covariance representations across modalities while respecting Riemannian geometry.
SPDIM~\citep{ICLR2025_fc6cd575} addresses the more challenging source-free domain adaptation setting by enforcing SPD-constrained parameterizations to handle both conditional shift and label shift without access to source-domain data.

\paragraph{\textbf{Domain Generalization}}
Domain generalization aims to learn domain-invariant representations from multiple source domains to generalize well to unseen target domains. Recent studies have explored various strategies to align feature distributions or disentangle domain-specific information. IW~\cite{Choi_2021_CVPR} addresses the issue where standard whitening transformations may eliminate domain-invariant content along with style. It proposes an Instance Selective Whitening (ISW) loss to disentangle domain-specific style (encoded in feature covariance) from domain-invariant content, selectively suppressing only the style-sensitive covariance terms to enhance robustness. DAC-SC~\cite{lee2023decompose} approaches domain generalization from a frequency domain perspective, treating amplitude and phase as style and content, respectively. It introduces a Decompose, Adjust, and Compose (DAC) framework with Style-Controlling (SC) normalization to explicitly adjust the degree of style variation while preserving content information. In the context of medical imaging, RIW~\cite{bi2024learning} identifies feature redundancy and misalignment as key barriers to generalization. It proposes a Relaxed Deep Whitening Transformation (RDWT) to minimize channel-wise correlations in a learnable manner, thereby maximizing the expressive power of per-channel representations. Extending this, DRIW~\cite{bi2025learning} introduces a Deep Restricted Isometry Whitening transform. By optimizing the spectral norm to satisfy the restricted isometry property, DRIW theoretically enforces strict orthogonality among feature channels, further reducing redundancy and improving cross-domain feature alignment.

\section{Proofs of the Theorems in the Main Paper}
\label{sec:proofs}
The proofs are organized following the logical dependencies of the results in the main paper,
starting from auxiliary bounds and complexity results, and culminating in the main compactness
and convergence theorems.

\subsection{Proof of the \texorpdfstring{\cref{thm:pem_projection}}{Lg}}
\begin{proof}[Proof of \cref{thm:pem_projection}]
Let $a\in S_{\mathcal{E}}$.
We prove \cref{eq:pem_projection,eq:pem_coordinate} by first characterizing the geodesic
$\mathcal{G}_a$ in closed form, and then deriving the associated slicing coordinate via
geodesic projection.

\noindent\textbf{Part I: Geodesic $\mathcal{G}_a$ in PEMs.}
Consider the geodesic curve
\begin{equation}
\label{eq:proof_pem_gamma_def}
\gamma_a(t)=\phi^{-1}\bigl(\phi(x_0)+t\,a\bigr),\qquad t\in\mathbb{R}.
\end{equation}
Then for any $s,t\in\mathbb{R}$, we have
\begin{equation}
\label{eq:proof_pem_geodesic_chain}
\begin{aligned}
d_{\mathcal{M}}\bigl(\gamma_a(s),\gamma_a(t)\bigr)
&= d_{\mathcal{M}} \left(\phi^{-1}\bigl(\phi(x_0)+s\,a\bigr),\ \phi^{-1}\bigl(\phi(x_0)+t\,a\bigr)\right) \\
&\overset{(1)}{=} \left\|\bigl(\phi(x_0)+s\,a\bigr)-\bigl(\phi(x_0)+t\,a\bigr)\right\|_{\mathcal{E}} \\
&= \|(s-t)a\|_{\mathcal{E}} \\
&= |s-t|\,\|a\|_{\mathcal{E}} \\
&\overset{(2)}{=} |s-t|.
\end{aligned}
\end{equation}
The derivation of \cref{eq:proof_pem_geodesic_chain} follows.

(1) follows from the pullback Euclidean distance
$d_{\mathcal{M}}(x,y)=\|\phi(x)-\phi(y)\|_{\mathcal{E}}$.

(2) follows from $a\in S_{\mathcal{E}}$, i.e., $\|a\|_{\mathcal{E}}=1$.

Thus, $\mathcal{G}_a=\{\gamma_a(t):t\in\mathbb{R}\}$ is a geodesic in $\mathcal{M}$. We denote this geodesic by
\begin{equation}
\label{eq:proof_pem_geodesic_set}
\mathcal{G}_a
=\bigl\{\gamma_a(t)=\phi^{-1}\bigl(\phi(x_0)+t\,a\bigr):t\in\mathbb{R}\bigr\}.
\end{equation}

\smallskip
\noindent\textbf{Part II: Slicing coordinate via geodesic projection.}
Let $x\in\mathcal{M}$. By definition, the geodesic projection of $x$ onto $\mathcal{G}_a$ is defined as
\begin{equation}
\label{eq:proof_pem_proj_def}
P^{\mathcal{G}_a}(x)
=\argmin_{y\in\mathcal{G}_a} d_{\mathcal{M}}(x,y)^2
=\argmin_{t\in\mathbb{R}} d_{\mathcal{M}}\bigl(x,\gamma_a(t)\bigr)^2.
\end{equation}
Let $f(t):=d_{\mathcal{M}}\bigl(x,\gamma_a(t)\bigr)^2$.
Then, we have
\begin{equation}
\label{eq:proof_pem_proj_chain}
\begin{aligned}
f(t)
&= d_{\mathcal{M}} \left(x,\phi^{-1}\bigl(\phi(x_0)+t\,a\bigr)\right)^2 \\
&\overset{(1)}{=} \left\|\phi(x)-\phi  \left(\phi^{-1}\bigl(\phi(x_0)+t\,a\bigr)\right)\right\|_{\mathcal{E}}^2 \\
&\overset{(2)}{=} \left\|\phi(x)-\bigl(\phi(x_0)+t\,a\bigr)\right\|_{\mathcal{E}}^2 \\
&\overset{(3)}{=} \left\langle \phi(x)-\phi(x_0)-t\,a,\ \phi(x)-\phi(x_0)-t\,a \right\rangle_{\mathcal{E}} \\
&\overset{(4)}{=} \left\langle \phi(x)-\phi(x_0),\ \phi(x)-\phi(x_0) \right\rangle_{\mathcal{E}}
- t\left\langle \phi(x)-\phi(x_0),\ a \right\rangle_{\mathcal{E}} \\
&\quad   - t\left\langle a,\ \phi(x)-\phi(x_0) \right\rangle_{\mathcal{E}}
+ t^2\left\langle a,\ a \right\rangle_{\mathcal{E}} \\
&\overset{(5)}{=} \left\|\phi(x)-\phi(x_0)\right\|_{\mathcal{E}}^2
-2t\left\langle a,\ \phi(x)-\phi(x_0)\right\rangle_{\mathcal{E}}
+t^2\langle a,a\rangle_{\mathcal{E}} \\
&\overset{(6)}{=} \left\|\phi(x)-\phi(x_0)\right\|_{\mathcal{E}}^2
-2t\left\langle a,\ \phi(x)-\phi(x_0)\right\rangle_{\mathcal{E}}
+t^2.
\end{aligned}
\end{equation}

The derivation of \cref{eq:proof_pem_proj_chain} follows.

(1) follows from the pullback Euclidean distance
$d_{\mathcal{M}}(x,y)=\|\phi(x)-\phi(y)\|_{\mathcal{E}}$.

(2) follows from $\phi\circ\phi^{-1}$ is the identify map.

(3) follows from $\|z\|_{\mathcal{E}}^2=\langle z,z\rangle_{\mathcal{E}}$.

(4) follows from the bilinearity of $\langle\cdot,\cdot\rangle_{\mathcal{E}}$.

(5) follows from the symmetry of the inner product,
$\langle u,v\rangle_{\mathcal{E}}=\langle v,u\rangle_{\mathcal{E}}$,
and the definition $\|u\|_{\mathcal{E}}^2=\langle u,u\rangle_{\mathcal{E}}$.

(6) follows from $a\in S_{\mathcal{E}}$, i.e., $\langle a,a\rangle_{\mathcal{E}}=1$.

Therefore, $f(t)$ is a quadratic function of $t$.
Its derivative with respect to $t$ is given by
\begin{equation}
\label{eq:proof_pem_proj_deriv}
f'(t)
= -2\left\langle a,\ \phi(x)-\phi(x_0)\right\rangle_{\mathcal{E}} + 2t.
\end{equation}
Since $f'(t)=0$ at the minimizer and $f''(t)=2>0$, the unique minimizer is given by
\begin{equation}
\label{eq:proof_pem_coord_final}
t_a(x)=\left\langle a,\ \phi(x)-\phi(x_0)\right\rangle_{\mathcal{E}},
\end{equation}
which yields the slicing coordinate defined in \cref{eq:pem_coordinate}.

Substituting \cref{eq:proof_pem_coord_final} into the geodesic parametrization
\cref{eq:proof_pem_gamma_def}, we obtain
\begin{equation}
\label{eq:proof_pem_proj_final}
P^{\mathcal{G}_a}(x)
=\gamma_a\bigl(t_a(x)\bigr)
=\phi^{-1}  \Bigl(\phi(x_0)+t_a(x)\,a\Bigr)\in\mathcal{G}_a,
\end{equation}
which yields the projection formula in \cref{eq:pem_projection}.

\end{proof}

There are two classical approaches to defining coordinates on geodesically complete Riemannian manifolds \citep{bonet2022hyperbolic}.
The first relies on geodesic projection, as adopted in the previous section,
while the second is based on Busemann functions \citep{bridson2013metric}.
In the following, we show that on pullback Euclidean manifolds, the Busemann coordinate associated with a geodesic ray admits a closed-form expression and coincides, up to a sign convention, with the slicing coordinate obtained via geodesic projection.

\begin{definition}[Busemann function on PEMs]
Let $(\mathcal{M},g)$ be a geodesically complete pullback Euclidean manifold induced by a diffeomorphism
$\phi:\mathcal{M}\to\mathcal{E}$.
Let $a\in S_{\mathcal{E}}$ and let
\begin{equation}
\gamma_a(t)=\phi^{-1}\bigl(\phi(x_0)+t,a\bigr),\qquad t\ge 0,
\end{equation}
be a geodesic ray on $\mathcal{M}$.
The Busemann function associated with $\gamma_a$ is defined as
\begin{equation}
\label{eq:pem_busemann_def}
B_{\gamma_a}(x)
=\lim_{t\to\infty}\bigl(d_{\mathcal{M}}(x,\gamma_a(t))-t\bigr),
\qquad \forall x\in\mathcal{M}.
\end{equation}
\end{definition}

\begin{proposition}[Busemann coordinates on PEMs]
\label[proposition]{prop:pem_busemann_coords}
Let $a\in S_{\mathcal{E}}$ and let $\gamma_a$ be the geodesic ray defined above.
Then, for any $x\in\mathcal{M}$, the associated Busemann function admits the closed form
\begin{equation}
\label{eq:pem_busemann_coord}
B_{\gamma_a}(x)
=
-\bigl\langle a,\ \phi(x)-\phi(x_0)\bigr\rangle_{\mathcal{E}}.
\end{equation}
\end{proposition}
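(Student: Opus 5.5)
The plan is to reduce everything to Euclidean computations in $\mathcal{E}$ through the pullback distance $d_{\mathcal{M}}(x,y)=\|\phi(x)-\phi(y)\|_{\mathcal{E}}$, in the same way as Part~I and Part~II of the proof of \cref{thm:pem_projection}. First, set $v:=\phi(x)-\phi(x_0)\in\mathcal{E}$. By the definition of $\gamma_a$ and the identity $\phi\circ\phi^{-1}=\mathrm{id}$,
\begin{equation*}
d_{\mathcal{M}}\bigl(x,\gamma_a(t)\bigr)=\bigl\|\phi(x)-\phi(x_0)-t\,a\bigr\|_{\mathcal{E}}=\|v-t\,a\|_{\mathcal{E}}.
\end{equation*}
Expanding the square as in \cref{eq:proof_pem_proj_chain} and using $\|a\|_{\mathcal{E}}=1$ gives
\begin{equation*}
d_{\mathcal{M}}\bigl(x,\gamma_a(t)\bigr)^2=t^2-2t\langle a,v\rangle_{\mathcal{E}}+\|v\|_{\mathcal{E}}^2 .
\end{equation*}
The Busemann function is therefore the limit of $\sqrt{t^2-2t\langle a,v\rangle_{\mathcal{E}}+\|v\|^2_{\mathcal{E}}}-t$ as $t\to\infty$, which is a one-variable computation.

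Second, we evaluate this limit by rationalizing. For $t>0$ large, $d_{\mathcal{M}}(x,\gamma_a(t))+t>0$, so
\begin{equation*}
d_{\mathcal{M}}\bigl(x,\gamma_a(t)\bigr)-t=\frac{-2t\langle a,v\rangle_{\mathcal{E}}+\|v\|_{\mathcal{E}}^2}{\sqrt{t^2-2t\langle a,v\rangle_{\mathcal{E}}+\|v\|_{\mathcal{E}}^2}+t}.
\end{equation*}
Divide numerator and denominator by $t$. The numerator tends to $-2\langle a,v\rangle_{\mathcal{E}}$. The denominator tends to $2$, since $\sqrt{1-2\langle a,v\rangle_{\mathcal{E}}/t+\|v\|_{\mathcal{E}}^2/t^2}\to1$. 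Hence the limit exists and equals $-\langle a,v\rangle_{\mathcal{E}}=-\langle a,\phi(x)-\phi(x_0)\rangle_{\mathcal{E}}$, which is the claimed formula. Alternatively, one could use the triangle inequality to show that $t\mapsto d_{\mathcal{M}}(x,\gamma_a(t))-t$ is nonincreasing and bounded below by $-\|v\|_{\mathcal{E}}$. That gives existence of the limit, but the explicit computation above already yields both existence and the value.

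The main obstacle is only a matter of rigor: checking that the limit is taken along the ray with $t\ge0$ and that the rationalization is valid. The square root is well defined because $d_{\mathcal{M}}(x,\gamma_a(t))^2\ge0$, and the denominator never vanishes for $t>0$. Finally, the result is compared with \cref{thm:pem_projection}: $B_{\gamma_a}(x)=-t_a(x)$. The Busemann coordinate therefore coincides with the geodesic-projection slicing coordinate up to sign. Because the one-dimensional Wasserstein distance is invariant under the reflection $s\mapsto -s$, the two coordinates induce the same sliced discrepancy.
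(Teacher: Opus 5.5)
Your proof is correct and follows the same route as the paper: use the pullback distance to reduce $d_{\mathcal{M}}(x,\gamma_a(t))-t$ to $\|u-t\,a\|_{\mathcal{E}}-t$ with $u=\phi(x)-\phi(x_0)$, then evaluate the limit as $t\to\infty$. The only difference is that you justify $\lim_{t\to\infty}(\|u-t\,a\|_{\mathcal{E}}-t)=-\langle u,a\rangle_{\mathcal{E}}$ explicitly by rationalizing, whereas the paper just attributes this step to standard Euclidean geometry.
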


\begin{proof}[Proof of \cref{prop:pem_busemann_coords}]
By definition, we have
\begin{equation}
B_{\gamma_a}(x)
=\lim_{t\to\infty}\bigl(d_{\mathcal{M}}(x,\gamma_a(t))-t\bigr).
\end{equation}
Using the pullback Euclidean distance
$d_{\mathcal{M}}(x,y)=\|\phi(x)-\phi(y)\|_{\mathcal{E}}$,
we obtain
\begin{equation}
\label{eq:proof_pem_busemann_chain}
\begin{aligned}
d_{\mathcal{M}}(x,\gamma_a(t))-t
&= \left\|\phi(x)-\bigl(\phi(x_0)+t\,a\bigr)\right\|_{\mathcal{E}} - t \\
&= \left\|\phi(x)-\phi(x_0)-t\,a\right\|_{\mathcal{E}} - t .
\end{aligned}
\end{equation}
Let $u=\phi(x)-\phi(x_0)\in\mathcal{E}$.
Since $a\in S_{\mathcal{E}}$, it follows from standard Euclidean geometry that
\begin{equation}
\label{eq:pem_busemann_limit}
\lim_{t\to\infty}\bigl(\|u-t\,a\|_{\mathcal{E}}-t\bigr)
=-\langle u,a\rangle_{\mathcal{E}}.
\end{equation}
Taking the limit in \cref{eq:proof_pem_busemann_chain} yields
\begin{equation}
B_{\gamma_a}(x)
=-\bigl\langle a,\ \phi(x)-\phi(x_0)\bigr\rangle_{\mathcal{E}},
\end{equation}
which establishes \cref{eq:pem_busemann_coord}.
\end{proof}

\begin{corollary}[Equivalence with slicing coordinates]
\label[corollary]{cor:pem_busemann_slice}
Let $t_a(x)$ denote the slicing coordinate obtained via geodesic projection.
Then
\begin{equation}
B_{\gamma_a}(x) = - t_a(x),
\end{equation}
and the Busemann projection onto $\mathcal{G}_a$ coincides with the geodesic projection.
\end{corollary}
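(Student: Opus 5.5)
The plan is to read both claims of \cref{cor:pem_busemann_slice} off the closed forms already established. For the identity $B_{\gamma_a}(x)=-t_a(x)$, compare \cref{prop:pem_busemann_coords}, which gives $B_{\gamma_a}(x)=-\langle a,\phi(x)-\phi(x_0)\rangle_{\mathcal{E}}$, with \cref{thm:pem_projection}, which gives $t_a(x)=\langle a,\phi(x)-\phi(x_0)\rangle_{\mathcal{E}}$ through \cref{eq:pem_coordinate}. The two expressions differ only by a sign, so this part is a one-line substitution.

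For the statement about projections, first make precise what the Busemann projection onto $\mathcal{G}_a$ is. Following \citet{bonet2022hyperbolic}, it sends $x$ to the unique point of $\mathcal{G}_a$ lying on the same level set of $B_{\gamma_a}$ as $x$. Concretely, it is the point $\gamma_a(s)$ with $B_{\gamma_a}(\gamma_a(s))=B_{\gamma_a}(x)$.

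Next evaluate the Busemann function along the geodesic itself. Since $\phi(\gamma_a(s))-\phi(x_0)=s\,a$ and $\|a\|_{\mathcal{E}}=1$, \cref{prop:pem_busemann_coords} gives $B_{\gamma_a}(\gamma_a(s))=-s$. The map $s\mapsto-s$ is a bijection of $\mathbb{R}$, so the level-set condition has exactly one solution. Combining it with the first claim, that solution is $s=-B_{\gamma_a}(x)=t_a(x)$. Hence the Busemann projection equals $\gamma_a(t_a(x))=\phi^{-1}(\phi(x_0)+t_a(x)\,a)$, which is exactly $P^{\mathcal{G}_a}(x)$ by \cref{eq:pem_projection}.

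The only delicate point is definitional rather than computational. One must fix the convention that the Busemann projection is the level-set intersection with $\mathcal{G}_a$, and check that the ray $\gamma_a$, defined only for $t\ge 0$, has a Busemann function that is still well defined on all of $\mathcal{M}$. It is, because the closed form holds for every $x$.

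Uniqueness of the intersection point follows from the strict monotonicity of $s\mapsto B_{\gamma_a}(\gamma_a(s))$. This monotonicity holds along the full line $\mathcal{G}_a$, i.e., for $s\in\mathbb{R}$, since the parametrization of \cref{eq:proof_pem_gamma_def} is valid there. A brief remark can record that the sign flip merely reverses orientation. It therefore does not affect one-dimensional Wasserstein distances, since $W_p$ between pushforwards is invariant under the reflection $t\mapsto-t$ applied to both measures.
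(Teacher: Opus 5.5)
Your proposal is correct and follows the paper's proof essentially step for step. You read off $B_{\gamma_a}=-t_a$ from the two closed forms, then use $B_{\gamma_a}(\gamma_a(s))=-s$ to identify the level-set point on $\mathcal{G}_a$ as $\gamma_a(t_a(x))=P^{\mathcal{G}_a}(x)$; solving the level-set equation over the full line $s\in\mathbb{R}$ is slightly more careful than the paper's wording, which speaks of the ``geodesic ray'' even though $t_a(x)$ can be negative.
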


\begin{proof}[Proof of \cref{cor:pem_busemann_slice}]
As shown in \cref{prop:pem_busemann_coords}, the Busemann coordinate of $x$ is
$B_{\gamma_a}(x)=-\langle a,\phi(x)-\phi(x_0)\rangle_{\mathcal{E}}$.
On the other hand, the slicing coordinate obtained by geodesic projection satisfies
$t_a(x)=\langle a,\phi(x)-\phi(x_0)\rangle_{\mathcal{E}}$.
Therefore, $B_{\gamma_a}(x)=-t_a(x)$.

The Busemann projection is defined as the unique point $\gamma_a(t)$ on the geodesic ray
sharing the same Busemann coordinate, i.e.,
$B_{\gamma_a}(x)=B_{\gamma_a}(\gamma_a(t))$.
Since $B_{\gamma_a}(\gamma_a(t))=-t$, we obtain $t=t_a(x)$,
which yields
\begin{equation}
P^{\mathcal{G}_a}(x)
=\gamma_a\bigl(t_a(x)\bigr)
=\phi^{-1}\Bigl(\phi(x_0)+t_a(x)\,a\Bigr).
\end{equation}
Hence, the Busemann projection coincides with the geodesic projection.
\end{proof}

\subsection{Proof of the \texorpdfstring{\cref{lem:translation_projection}}{Lg}}
\label{sec:proof:lem_translation_projection}

\begin{proof}[Proof of \cref{lem:translation_projection}]
By the definition of the slicing coordinate in \cref{eq:pem_coordinate}, we have
\begin{equation}
\label{eq:proof_ta_def}
t_a(x)
=\bigl\langle a,\ \phi(x)-\phi(x_0)\bigr\rangle_{\mathcal{E}}.
\end{equation}
Let $\pi_a:\mathcal{E}\to\mathbb{R}$ be defined by $\pi_a(z)=\langle a,z\rangle_{\mathcal{E}}$.
Then \cref{eq:proof_ta_def} can be equivalently written as
\begin{equation}
\label{eq:proof_ta_shift}
t_a(x)
=\pi_a(\phi(x))-\pi_a(\phi(x_0)),
\end{equation}
which establishes \cref{eq:ta_shifted}.

Let $\tilde{\mu}=\phi_\#\mu$.
Since
\(
t_a=\tau_{-\pi_a(\phi(x_0))}\circ \pi_a\circ\phi
\),
where $\tau_c:\mathbb{R}\to\mathbb{R}$ denotes the translation $\tau_c(s)=s+c$,
it follows that
\begin{equation}
t_a{}_\#\mu
=(\tau_{-\pi_a(\phi(x_0))})_\#\bigl(\pi_a{}_\#\tilde{\mu}\bigr).
\end{equation}
Therefore, $t_a{}_\#\mu$ is a translation of $\pi_a{}_\#\tilde{\mu}$.
It can be readily verified that the same conclusion holds for $\nu$.
\end{proof}

\subsection{Proof of the \texorpdfstring{\cref{thm:pem_sw_euclidean}}{Lg}}
\label{sec:proof:thm_pem_sw_euclidean}

\begin{proof}[Proof of $\mathrm{PEMSW}_p(\mu,\nu)=\mathrm{SW}_p(\tilde{\mu},\tilde{\nu})$]
For each $a\in S_{\mathcal{E}}$, we consider the linear functional
$z\mapsto\langle a,z\rangle_{\mathcal{E}}$ on $\mathcal{E}$.
By \cref{lem:translation_projection}, this functional induces the slicing coordinate
\begin{equation}
\label{eq:proof_pem_ta_pi}
t_a(x)=\langle a,\phi(x)\rangle_{\mathcal{E}}-\langle a,\phi(x_0)\rangle_{\mathcal{E}},
\qquad \forall\,x\in\mathcal{M}.
\end{equation}
Let $\tilde{\mu}=\phi_\#\mu$ and $\tilde{\nu}=\phi_\#\nu$.
Using the standard pushforward representation of the $p$-Wasserstein distance
(e.g., \citep[Lemma~6]{paty2019subspace}),
we obtain
\begin{equation}
\label{eq:proof_pem_equiv_chain}
\begin{aligned}
W_p^p\bigl(t_a{}_\#\mu,\ t_a{}_\#\nu\bigr)
&= \inf_{\gamma\in\Pi(\mu,\nu)}
\int_{\mathcal{M}\times\mathcal{M}}
\bigl|t_a(x)-t_a(y)\bigr|^p\,\mathrm{d}\gamma(x,y) \\
&\overset{(1)}{=} \inf_{\gamma\in\Pi(\mu,\nu)}
\int_{\mathcal{M}\times\mathcal{M}}
\Bigl|\langle a,\phi(x)\rangle_{\mathcal{E}}
      -\langle a,\phi(y)\rangle_{\mathcal{E}}\Bigr|^p\,
\mathrm{d}\gamma(x,y) \\
&= W_p^p\bigl((\langle a,\cdot\rangle_{\mathcal{E}})_\#\tilde{\mu},\
              (\langle a,\cdot\rangle_{\mathcal{E}})_\#\tilde{\nu}\bigr),
\end{aligned}
\end{equation}
where $(\phi\times\phi)_\#\gamma\in\Pi(\tilde{\mu},\tilde{\nu})$.

The derivation of \cref{eq:proof_pem_equiv_chain} follows.

(1) follows from \cref{eq:proof_pem_ta_pi}, since the constant terms
$\langle a,\phi(x_0)\rangle_{\mathcal{E}}$ cancel out in the difference
$t_a(x)-t_a(y)$.

By integrating both sides of \cref{eq:proof_pem_equiv_chain} with respect to $a\in S_{\mathcal{E}}$,
we obtain
\begin{equation}
\mathrm{PEMSW}_p^p(\mu,\nu)=\mathrm{SW}_p^p(\tilde{\mu},\tilde{\nu}).
\end{equation}

\end{proof}

\subsection{Proof of the \texorpdfstring{\cref{prop:pem_x0_invariance}}{Lg}}
\label{sec:proof:prop_pem_x0_invariance}

\begin{proof}[Proof of \cref{prop:pem_x0_invariance}]
For any $a\in S_{\mathcal{E}}$, the slicing coordinate associated with $x_i$ is given by
\begin{equation}
t^{(x_i)}_a(x)=\langle a,\phi(x)-\phi(x_i)\rangle_{\mathcal{E}},\quad i\in\{0,1\}.
\end{equation}
Therefore, for all $x\in\mathcal{M}$,
\begin{equation}
t^{(x_1)}_a(x)
=
t^{(x_0)}_a(x)
+\langle a,\phi(x_0)-\phi(x_1)\rangle_{\mathcal{E}}.
\end{equation}
Hence, there exists a constant $c_a\in\mathbb{R}$ such that
\begin{equation}
t^{(x_1)}_a=\tau_{c_a}\circ t^{(x_0)}_a,
\end{equation}
where $\tau_{c_a}(s)=s+c_a$ denotes a translation on $\mathbb{R}$.
Consequently,
\begin{equation}
\begin{aligned}
t^{(x_1)}_a{}_\#\mu
&=(\tau_{c_a})_\#\bigl(t^{(x_0)}_a{}_\#\mu\bigr), \\
t^{(x_1)}_a{}_\#\nu
&=(\tau_{c_a})_\#\bigl(t^{(x_0)}_a{}_\#\nu\bigr).
\end{aligned}
\end{equation}
Since the $p$-Wasserstein distance on $\mathbb{R}$ is invariant under translations, it follows that
\begin{equation}
W_p\Bigl(t^{(x_0)}_a{}_\#\mu,\ t^{(x_0)}_a{}_\#\nu\Bigr)
=
W_p\Bigl(t^{(x_1)}_a{}_\#\mu,\ t^{(x_1)}_a{}_\#\nu\Bigr),
\quad \forall\,a\in S_{\mathcal{E}}.
\end{equation}
This establishes the reference-point invariance of $\mathrm{PEMSW}_p$.
\end{proof}

\subsection{Proof of the \texorpdfstring{\cref{prop:pem_mean_lower}}{Lg}}
\label{sec:proof:prop_pem_mean_lower}

This subsection derives a lower bound on $\mathrm{PEMSW}_p$ in terms of the difference between the Euclidean means of the embedded measures.
The proof first bounds each one-dimensional Wasserstein term from below by the difference of one-dimensional means and then averages the result over $S_{\mathcal{E}}$.

\begin{proof}[Proof of \cref{prop:pem_mean_lower}]
Let $p\ge 1$ and let $\mu,\nu\in\mathcal{P}_p(\mathcal{M})$.
The proof again uses $\tilde{\mu}=\phi_\#\mu$ and $\tilde{\nu}=\phi_\#\nu$.
The proof also fixes $a\in S_{\mathcal{E}}$ and applies a Jensen-type bound to any coupling $\gamma\in\Pi(\mu,\nu)$:
\begin{equation}
\label{eq:proof_pem_mean_jensen}
\int_{\mathcal{M}\times\mathcal{M}} |t_a(x)-t_a(y)|^p\,\mathrm{d}\gamma(x,y)
\ge
\left|
\int_{\mathcal{M}\times\mathcal{M}} \bigl(t_a(x)-t_a(y)\bigr)\,\mathrm{d}\gamma(x,y)
\right|^p.
\end{equation}
The infimum over $\gamma\in\Pi(\mu,\nu)$ turns \cref{eq:proof_pem_mean_jensen} into a pointwise lower bound for $W_p$:
\begin{equation}
\label{eq:proof_pem_mean_wp_lower}
W_p\bigl(t_a{}_\#\mu,\ t_a{}_\#\nu\bigr)^p
\ge
\left|
\int_{\mathcal{M}} t_a(x)\,\mathrm{d}\mu(x)
-
\int_{\mathcal{M}} t_a(y)\,\mathrm{d}\nu(y)
\right|^p.
\end{equation}

The slicing coordinate formula in \cref{eq:pem_coordinate} expresses $t_a$ as
$t_a(x)=\langle a,\phi(x)-\phi(x_0)\rangle_{\mathcal{E}}$.
This representation implies that the reference term cancels in the difference of expectations, and the proof obtains
\begin{equation}
\label{eq:proof_pem_mean_means}
\begin{aligned}
\int_{\mathcal{M}} t_a(x)\,\mathrm{d}\mu(x)
&-
\int_{\mathcal{M}} t_a(y)\,\mathrm{d}\nu(y) \\
&=
\left\langle a,\ \int_{\mathcal{M}}\phi(x)\,\mathrm{d}\mu(x)-\int_{\mathcal{M}}\phi(y)\,\mathrm{d}\nu(y)\right\rangle_{\mathcal{E}} \\
&=
\langle a,\ m_{\tilde{\mu}}-m_{\tilde{\nu}}\rangle_{\mathcal{E}},
\end{aligned}
\end{equation}
where $m_{\tilde{\mu}}=\int_{\mathcal{E}} z\,\mathrm{d}\tilde{\mu}(z)$ and
$m_{\tilde{\nu}}=\int_{\mathcal{E}} z\,\mathrm{d}\tilde{\nu}(z)$.
The combination of \cref{eq:proof_pem_mean_wp_lower,eq:proof_pem_mean_means} yields
\begin{equation}
\label{eq:proof_pem_mean_pointwise}
W_p\bigl(t_a{}_\#\mu,\ t_a{}_\#\nu\bigr)^p
\ge
\bigl|\langle a,\ m_{\tilde{\mu}}-m_{\tilde{\nu}}\rangle_{\mathcal{E}}\bigr|^p.
\end{equation}

The definition of $\mathrm{PEMSW}_p$ integrates $W_p^p(t_a{}_\#\mu,t_a{}_\#\nu)$ over $S_{\mathcal{E}}$.
This definition and \cref{eq:proof_pem_mean_pointwise} yield
\begin{equation}
\label{eq:proof_pem_mean_integrate}
\mathrm{PEMSW}_p(\mu,\nu)^p
\ge
\int_{S_{\mathcal{E}}}
\bigl|\langle a,\ m_{\tilde{\mu}}-m_{\tilde{\nu}}\rangle_{\mathcal{E}}\bigr|^p\,\mathrm{d}\lambda(a).
\end{equation}
Lemma~\ref{lem:sphere_kappa} evaluates the remaining spherical integral and gives
\begin{equation}
\int_{S_{\mathcal{E}}}
\bigl|\langle a,\ m_{\tilde{\mu}}-m_{\tilde{\nu}}\rangle_{\mathcal{E}}\bigr|^p\,\mathrm{d}\lambda(a)
=
\kappa_{D,p}\,\|m_{\tilde{\mu}}-m_{\tilde{\nu}}\|_{\mathcal{E}}^p.
\end{equation}
The last identity and \cref{eq:proof_pem_mean_integrate} establish \cref{eq:pem_mean_lower}.
\end{proof}

\subsection{Proof of the \texorpdfstring{\cref{prop:pem_maxsw}}{Lg}}
\label{sec:proof:prop_pem_maxsw}

This subsection proves the two-sided comparison between $\mathrm{PEMSW}_p$, the max-sliced variant, and $W_p^{\mathcal{M}}$.
The first inequality compares an $L^p$-average with a supremum, while the second inequality uses a $1$-Lipschitz control of the slicing maps.

\begin{proof}[Proof of \cref{prop:pem_maxsw}]
Let $p\ge 1$ and let $\mu,\nu\in\mathcal{P}_p(\mathcal{M})$.

\noindent\textbf{(i) The proof shows that $\mathrm{PEMSW}_p(\mu,\nu)\le \mathrm{PEM\text{-}MaxSW}_p(\mu,\nu)$.}
The definition of the supremum implies that every direction $a\in S_{\mathcal{E}}$ satisfies
\begin{equation}
W_p\bigl(t_a{}_\#\mu,\ t_a{}_\#\nu\bigr)
\le
\sup_{b\in S_{\mathcal{E}}} W_p\bigl(t_b{}_\#\mu,\ t_b{}_\#\nu\bigr).
\end{equation}
The proof raises both sides to the power $p$ and integrates with respect to $\lambda$:
\begin{equation}
\int_{S_{\mathcal{E}}}
W_p\bigl(t_a{}_\#\mu,\ t_a{}_\#\nu\bigr)^p\,\mathrm{d}\lambda(a)
\le
\int_{S_{\mathcal{E}}}
\left(\sup_{b\in S_{\mathcal{E}}} W_p\bigl(t_b{}_\#\mu,\ t_b{}_\#\nu\bigr)\right)^p
\mathrm{d}\lambda(a).
\end{equation}
Because $\lambda$ is a probability measure, the integral on the right-hand side equals the constant
$\left(\sup_{b\in S_{\mathcal{E}}} W_p(t_b{}_\#\mu,t_b{}_\#\nu)\right)^p$.
The definition of $\mathrm{PEMSW}_p$ completes the first inequality:
\begin{equation}
\mathrm{PEMSW}_p(\mu,\nu)
\le
\sup_{a\in S_{\mathcal{E}}} W_p\bigl(t_a{}_\#\mu,\ t_a{}_\#\nu\bigr)
=
\mathrm{PEM\text{-}MaxSW}_p(\mu,\nu).
\end{equation}

\noindent\textbf{(ii) The proof shows that $\mathrm{PEM\text{-}MaxSW}_p(\mu,\nu)\le W_p^{\mathcal{M}}(\mu,\nu)$.}
The proof fixes $a\in S_{\mathcal{E}}$ and chooses an optimal coupling $\gamma^\star\in\Pi(\mu,\nu)$ for $W_p^{\mathcal{M}}(\mu,\nu)$.
The definition of $W_p$ gives
\begin{equation}
W_p\bigl(t_a{}_\#\mu,\ t_a{}_\#\nu\bigr)^p
\le
\int_{\mathcal{M}\times\mathcal{M}} |t_a(x)-t_a(y)|^p\,\mathrm{d}\gamma^\star(x,y).
\end{equation}
The proof rewrites the integrand by using \cref{eq:pem_coordinate} and cancellation of the reference term:
\begin{equation}
\label{eq:proof_pem_maxsw_rewrite}
|t_a(x)-t_a(y)|
=
\bigl|\langle a,\phi(x)-\phi(y)\rangle_{\mathcal{E}}\bigr|.
\end{equation}
Cauchy--Schwarz and $\|a\|_{\mathcal{E}}=1$ yield the pointwise estimate
\begin{equation}
\bigl|\langle a,\phi(x)-\phi(y)\rangle_{\mathcal{E}}\bigr|
\le
\|\phi(x)-\phi(y)\|_{\mathcal{E}}.
\end{equation}
The pullback Euclidean identity $d_{\mathcal{M}}(x,y)=\|\phi(x)-\phi(y)\|_{\mathcal{E}}$ therefore yields
\begin{equation}
W_p\bigl(t_a{}_\#\mu,\ t_a{}_\#\nu\bigr)^p
\le
\int_{\mathcal{M}\times\mathcal{M}} d_{\mathcal{M}}(x,y)^p\,\mathrm{d}\gamma^\star(x,y)
=
W_p^{\mathcal{M}}(\mu,\nu)^p.
\end{equation}
The proof finishes by taking the supremum over $a\in S_{\mathcal{E}}$ and combining the two steps.
\end{proof}

\subsection{Proof of the \texorpdfstring{\cref{cor:olm_geodesic_projection}}{Lg}}
\label{sec:proof:cor_olm_geodesic_projection}

\begin{proof}[Proof of \cref{cor:olm_geodesic_projection}]
The slicing coordinate in \cref{eq:pem_coordinate} states that, for any $x\in\mathcal{M}$ and any unit direction $a\in S_{\mathcal{E}}$,
\begin{equation}
\label{eq:cor_olm_from_pem}
t_a(x)=\langle a,\phi(x)-\phi(x_0)\rangle_{\mathcal{E}}.
\end{equation}
In the present setting, the embedding space $\mathcal{E}$ is identified with the OLM tangent-type space equipped with the Frobenius inner product, so $\langle\cdot,\cdot\rangle_{\mathcal{E}}=\langle\cdot,\cdot\rangle_F$ and $a$ is written as $A\in S_{\mathrm{Hol}}$.
Substituting $\phi=\offlog$, $x=C$, and $x_0=I_n$ into \cref{eq:cor_olm_from_pem} gives
\begin{equation}
t^{o}_{A}(C)=\langle A,\offlog(C)-\offlog(I_n)\rangle_F.
\end{equation}
The construction of $\offlog$ ensures $\offlog(I_n)=0$, and therefore the coordinate reduces to
\begin{equation}
t^{o}_{A}(C)=\langle A,\offlog(C)\rangle_F,
\end{equation}
which is exactly \cref{eq:olm_pem_coordinate_cor}.
\end{proof}

\subsection{Proof of the \texorpdfstring{\cref{cor:lsm_geodesic_projection}}{Lg}}
\label{sec:proof:cor_lsm_geodesic_projection}

\begin{proof}[Proof of \cref{cor:lsm_geodesic_projection}]
The same specialization of \cref{eq:pem_coordinate} yields
\begin{equation}
\label{eq:cor_lsm_from_pem}
t_A^{\star}(C)=\langle A,\lslog(C)-\lslog(I_n)\rangle_F,
\end{equation}
because the LSM embedding space also uses the Frobenius inner product and the unit direction is written as $A\in S_{\mathrm{Row}}$.
The definition of $\lslog$ satisfies $\lslog(I_n)=0$, so \cref{eq:cor_lsm_from_pem} simplifies to
\begin{equation}
t_A^{\star}(C)=\langle A,\lslog(C)\rangle_F,
\end{equation}
which matches \cref{eq:lsm_pem_coordinate_cor}.
\end{proof}

\end{document}